\documentclass[11pt]{article}

\usepackage{fullpage}
\usepackage[utf8]{inputenc} % allow utf-8 input
\usepackage[T1]{fontenc}    % use 8-bit T1 fonts
\usepackage{hyperref}       % hyperlinks
\usepackage{url}            % simple URL typesetting
\usepackage{booktabs}       % professional-quality tables
\usepackage{amsfonts}       % blackboard math symbols
\usepackage{nicefrac}       % compact symbols for 1/2, etc.
\usepackage{microtype}      % microtypography
\usepackage{graphicx}
\usepackage{caption}
\usepackage{subcaption}
\usepackage{xcolor}
\usepackage{soul}
\usepackage{doi}
\usepackage{authblk}
\usepackage{changepage}
\usepackage{amsmath}
\usepackage{amssymb}
\usepackage{enumerate}
\usepackage{epstopdf}
\usepackage[ruled,vlined]{algorithm2e}
\usepackage{algpseudocode}

\usepackage{mathtools}
\usepackage{setspace}
\usepackage{wrapfig}

\usepackage{amsthm}

\theoremstyle{plain}
\newtheorem{theorem}{Theorem}
\numberwithin{theorem}{section}
\newtheorem{proposition}{Proposition}
\numberwithin{proposition}{section}
\newtheorem{lemma}{Lemma}
\numberwithin{lemma}{section}

\numberwithin{corollary}{section}

\numberwithin{example}{section}

\theoremstyle{definition}
\newtheorem{definition}{Definition}
\numberwithin{definition}{section}
\newtheorem{remark}{Remark}
\numberwithin{remark}{section}

\numberwithin{equation}{section}

% equation commands here

\providecommand{\keywords}[1]
{
  \small	
  \textbf{Keywords:} #1
}
\title{Bayesian neural networks with interpretable priors from Mercer kernels}

\date{January 31, 2026}

\begin{document}

\author[1]{Alex Alberts\thanks{Corresponding author   \href{mailto:albert31@purdue.edu}{\color{blue}{\texttt{albert31@purdue.edu}}}}}
\author[1]{Ilias Bilionis}
\affil[1]{School of Mechanical Engineering, Purdue University, West Lafayette, IN}

\maketitle

\begin{abstract}
    Quantifying the uncertainty in the output of a neural network is essential for deployment in scientific or engineering applications where decisions must be made under limited or noisy data. Bayesian neural networks (BNNs) provide a framework for this purpose by constructing a Bayesian posterior distribution over the network parameters. However, the prior, which is of key importance in any Bayesian setting, is rarely meaningful for BNNs. This is because the complexity of the input-to-output map of a BNN makes it difficult to understand how certain distributions enforce any interpretable constraint on the output space of the network. Gaussian processes (GPs), on the other hand, are often preferred in uncertainty quantification tasks due to their interpretability. The drawback is that GPs are limited to small datasets without advanced techniques, which often rely on the covariance kernel having a specific structure. To address these challenges, we introduce a new class of priors for BNNs, called Mercer priors, such that the resulting BNN has samples which approximate that of a specified GP. The method works by defining a prior directly over the network parameters from the Mercer representation of the covariance kernel, and does not rely on the network having a specific structure. In doing so, we can exploit the scalability of BNNs in a meaningful Bayesian way.
\end{abstract}

\keywords{Bayesian neural networks, Gaussian processes, surrogate modeling, inverse problems, reproducing kernel Hilbert spaces}

\onehalfspacing

\section{Introduction}
 
As neural networks begin to be employed in sensitive areas, such as complex engineering systems or healthcare~\cite{topol2019high, sarvamangala2022convolutional, kufel2023machine, hans2025smurf}, the need to reliably assess their predictions becomes critically important.
In these settings, it is oftentimes insufficient for a neural network to simply provide accurate point predictions, as the model must also quantify the uncertainty in the output to inform decisions.
For neural networks, this task can be formulated under the Bayesian paradigm by treating the network parameters as random variables and constructing a posterior distribution conditioned on the data, which leads to a Bayesian neural network (BNN)~\cite{mackay1992practical,neal2012bayesian}.
The complexity of the input-to-output map of a neural network makes it difficult to enforce any interpretable constraints on the output space of the BNN when assigning a specific form of the prior.
As a result, the default choice is simply to place an independent and identically distributed Gaussian prior over the individual network parameters.
Although this choice is certainly convenient, it is not always appropriate, and rarely translates to any meaningful constraints.
In contrast, Gaussian processes (GPs) offer a greater degree of interpretability for uncertainty quantification tasks.
This comes at a greater computational cost, however, as GPs do not scale well to large datasets without clever implementations that restrict the possible covariance structures.

Notably, a close relationship between BNNs and GPs is well-documented in the literature~\cite{lee2017deep, matthews2018gaussian, novak2018bayesian, dutordoir2021deep}.
For example, when the parameters of a BNN are drawn independently from a Gaussian distribution in the usual manner, the resulting distribution over functions converges to a GP in the infinite-width limit~\cite{neal2012bayesian}.
However, the form of the limiting GP depends entirely on the activation function of the network, as this determines the covariance kernel.
This means that specifying a desired GP prior would require identifying a tailor-made activation function whose induced kernel matches that of the GP.
While this has been shown to be possible for certain cases~\cite{williams1996computing, lee2017deep, yang2019output, ranftl2023physics}, a general procedure remains out of reach.
This observation motivates an alternative viewpoint: rather than fixing the activation function and sampling the parameters independently, is it instead possible to design the parameter distribution so that the BNN approximates a given GP?

In this work, we propose a solution by introducing a new class of priors for BNNs, which we call Mercer priors.
The key idea is to construct the prior over the BNN parameters directly from the Mercer representation of a target GP kernel, which then resembles the probability density function of Gaussian distribution.
In doing so, a covariance structure is placed over the BNN parameters such that the network resembles draws from the GP in function space.
This approach allows the BNN to inherit the interpretability of GP priors while retaining the scalability of neural networks.

We first motivate the technique and contextualize the Mercer prior within the available methods for sampling BNNs in Sec.~\ref{sec:background}.
The prior is introduced in Sec.~\ref{sec:method}, where we also derive a scalable sampling scheme and discuss possible covariance choices.
After defining the Mercer prior, we do a deep dive into its use in sampling BNNs which resemble Brownian motion draws in Sec.~\ref{sec:study}.
We compare the output statistics of the BNN to true Brownian motion, and provide numerical evidence that convergence is expected in the infinite-width limit.
We then present three application-driven examples in Sec.~\ref{sec:applications}: (i) hierarchical GP regression under heteroscedastic noise, (ii) time series prediction on data with a periodic structure, and (iii) a nonlinear elliptic PDE inverse problem with real-world data.
Together, these results illustrate how Mercer priors provide a principled and scalable way to endow BNNs with an interpretable structure, enabling them to be deployed in uncertainty quantification tasks that are currently out of reach for standard GP models.

\section{Background and setting}
\label{sec:background}

To motivate the method, we study the task of approximating a deterministic function $u : \Omega \to \mathbb{R}$, $\Omega \subset \mathbb
R^d$, with a BNN.
We will assume that $u \in L^2(\Omega)$.
We place a focus on the regression task, where we have noisy observational data $y \in \mathbb{R}^n$ appearing in the form $y = R(u) + \gamma$, where $R : L^2(\Omega) \to \mathbb{R}^n$ is the forward observation map, and $\gamma \sim \mathcal{N}(0,\Gamma)$ is additive noise.
The use of a BNN allows for uncertainty quantification about the reconstruction of $u$ from the data.
Fixing an activation function $h:\mathbb{R}\to\mathbb{R}$, we define a BNN $u_{\theta}$ with $L\geq 1$ hidden layers as the stochastic process
\begin{equation}
    \label{eqn:BNN}
    u_{\theta}(t) = \sum_{j=1}^{N_L} w^L_{1,j}h\left(z_{j}^L(t)\right), \quad z_i^s(t) = \sum_{j=1}^{N_{s-1}}w_{i,j}^{s-1}h\left(z_j^{s-1}(t)\right) + b_i^{s-1}, \quad s=2,\dots, L,
\end{equation}
where the input layer follows $z_i^1(t) = \sum_{j=1}^dw_{i,j}^0t_j + b_i^0$.
We let $N_s = \dim(z^s(t))$ denote the number of neurons in layer $s$.
The parameters of the BNN are the weights $w_{i,j}^s \in \mathbb{R}$ and the biases $b_i^s \in \mathbb{R}$, which are consolidated into the collection $\theta$.
Eq.~(\ref{eqn:BNN}) defines a stochastic process $u : \Omega \times \Theta \to \mathbb{R}$, where $\Theta$ is a measurable parameter space on which a prior distribution on the BNN parameters $\theta$ is assigned.
In the case $\dim(\Theta)<\infty$, i.e., for a finite-width network, we denote the prior over the BNN parameters as $p(\theta)$.
We are also interested in studying infinite-width networks, where it is no longer rigorous to write $p(\theta)$.
In this case, we denote the prior as some probability measure $\mu_0$ over the space $\Theta$.
Either way, the mapping $\theta \mapsto u_{\theta}$ is a random field taking values in $L^2(\Omega)$.

Training a BNN follows the Bayesian approach, where the latent function $u$ is modeled as the stochastic process in eq.~(\ref{eqn:BNN}).
Then, the BNN parameters are constrained to a posterior measure $\mu^y$ through Bayes's rule by conditioning on the data.
Under some light assumptions (and using our specific form of the data), Bayes's theorem states that $\mu^y$ is absolutely continuous with respect to $\mu_0$ with Radon-Nikodym derivative given by the likelihood:
$$
\frac{d\mu^y}{d\mu_0}(\theta) \propto \exp\left(-\frac{1}{2}\|\Gamma^{-1}(y-R(u_{\theta}))\|^2\right),
$$
which holds even when $\dim(\Theta) = \infty$~\cite{stuart2010inverse}.
In the case of a finite-width network, this reduces to the well-known formula $p(\theta|y) \propto p(y|\theta)p(\theta)$, with the likelihood being $p(y|\theta)\propto \exp\left(-\tfrac{1}{2}\|\Gamma^{-1}(y-R(u_{\theta}))\|^2\right)$.
Note that this form of the posterior arises under Gaussian measurement noise.
In the finite-dimensional case, as with training a BNN, non-Gaussian noise is easily handled by a straightforward application of Bayes's theorem.

\subsection{Wide neural networks and Gaussian processes}

One of the most critical components in any Bayesian scheme is the choice of the prior.
In a regression task, the prior should be chosen to reflect preexisting knowledge about $u$, e.g., regularity constraints.
The standard practice with BNNs is to place a prior over the parameters directly.
Due to the complexity of the BNN structure of eq.~(\ref{eqn:BNN}), it is difficult to interpret how any given prior distribution $p(\theta)$ enforces constraints on the output of the BNN in function space.
For this reason, in the majority of cases, $p(\theta)$ is simply chosen so that each parameter is a priori independent and normally distributed with a mean of zero and some tunable variance.
That is, we assume that each of the network weights follow $w^{s}_{i,j} \sim \mathcal{N}(0,\sigma^2_{w_s})$ with $\sigma_{w_s}^2 = \sigma^2/ N_s$ and the biases are distributed according to $b_i^{s} \sim \mathcal{N}(0,\sigma_{b_s}^2)$, for some $\sigma,\sigma_{b_s} > 0$ fixed.
The weight variance $\sigma^2_{w_s}$ is scaled by the number of neurons so that a limit theorem holds.

At initialization, it is known that the BNN converges asymptotically as the number of neurons in any layer approaches infinity to a zero-mean Gaussian process (GP) with a covariance kernel defined by the network's activation function~\cite{neal2012bayesian}.
The associated GP is known as the neural network Gaussian process (NNGP).
For a shallow network ($L=1$), the associated NNGP covariance is
$$
k(t,t') = \sigma_w^2\mathbb{E}[h(t^{\intercal}w_1^0+b_1^0)h(t'^{\intercal}w_1^0+ b_1^0)] + \sigma_b^2.
$$
When $L>1$, there is a recursive formula for $k$ with a similar structure.

In contrast to BNNs, the behavior of GPs in function spaces is very interpretable, making them popular in machine learning tasks~\cite{williams2006gaussian}.
This is attributed to their nice analytical properties, e.g., the covariance kernel of a GP completely determines how the sample paths behave.
The main drawback of GPs is the difficulty in scaling to large datasets.
Under the most basic setting, generating a sample path from a GP scales cubically with the number of data points.
Multiple methods have been proposed to combat this by clever manipulation of the covariance matrices, such as sparse GPs~\cite{snelson2005sparse, titsias2009variational, hensman2013gaussian}, KISS-GP~\cite{wilson2015kernel}, or the use of a separable kernel~\cite{bilionis2013multi}.
These methods typically require the data or evaluation points to be on a regular grid, which may not always be practical in certain applications, e.g., boundary layers.
Another scalable approach is deep kernel learning~\cite{wilson2016deep}, but GPs with deep kernels are known to oftentimes overfit~\cite{ober2021promises}.
Finally, one could represent the GP with the corresponding Karhunen-Lo\`eve expansion (KLE)~\cite{williams2006gaussian}.
For a centered GP the KLE takes the form
$$
u(\cdot) = \sum_{n=1}^{\infty}\sqrt{\lambda_n}\xi_n\phi_n(\cdot),
$$
where $\lambda_n$ and $\phi_n$ are the eigenvalues and eigenfunctions of the kernel integral operator $Su = \int k(\cdot,t)u(t)dt$, and $\xi_n\overset{\text{i.i.d.}}{\sim}\mathcal{N}(0,1)$.
The expansion is truncated to a finite number of terms $M$ in practice.
When using the KLE, the computational complexity is transferred from inverting a large matrix to constructing the basis functions.
However, since the eigenfunctions $\phi_n$ are global and fixed, performance may suffer when the data is non-uniformly spaced or sparse, unless $M$ is large.
Also, the KLE suffers from the curse of dimensionality, as the number of eigenfunctions needed to capture most of the variance grows exponentially with the dimension.
A comprehensive review of these ideas can be found in~\cite{liu2020gaussian}.

\subsection{Related methods}

Looking to gain both the expressivity of GPs and scalability of BNNs, several methods have been proposed that exploit the relationship between the two.
This can be done using the correspondence between a BNN and the associated NNGP, so that the problem of prior selection for a BNN is transferred to the easier problem of prior selection for GPs.
One class of methods selects the prior in order to minimize some objective between the BNN and the target GP~\cite{flam2017mapping,hafner2020noise,sun2019functional}.
The drawback is that the user is constrained to a particular algorithm for posterior inference (like variational inference).
These methods also typically need to see some of the training data upfront in order to construct the prior, e.g.~\cite{meng2022learning}.
Other methods rely on changing the architecture of the BNN to ensure convergence to the GP in an appropriate sense~\cite{pearce2020expressive, albert2020gaussian, ranftl2023physics}.
This of course works on a case-by-case basis.

Perhaps the most direct method is to place a distribution on the parameters so that the BNN statistics begin to match those of the GP.
This allows for any architecture or sampling algorithm to be used.
The best known attempt to do so uses the ridgelet prior of~\cite{matsubara2021ridgelet}.
\textbf{}This technique makes use of the fact that a feedforward neural network is equivalent to a quadrature of a ridgelet and dual ridgelet transformation~\cite{candes1998ridgelets, sonoda2017neural, sonoda2018global}.
While this method guarantees convergence to the desired GP, it is potentially bottlenecked by the need to invert covariance matrices layer by layer in order to evaluate the prior density.
As noted by the authors, this scales cubically in the layer width $N_{s}$, which may limit the method to smaller networks.
It is also worth noting that the ridgelet prior suffers from the curse of dimensionality.
Because computing the ridgelet prior involves numerical approximation of the ridgelet and dual ridgelet transformations with quadrature rules, this does not scale well to high-dimensional input spaces.
This can be explicitly seen in the error bound given in~\cite[Theorem 1]{matsubara2021ridgelet}, where an $O(D^{-1/d})$ term appears, with $D$ being the number of quadrature points used to approximate the ridgelet transformation.

The ridgelet prior and the Mercer prior we present in this work are attempting to solve the same problem.
Namely that we want to sample the parameters of a BNN such that the resulting function resembles draws from a predetermined GP.
The difference then comes down to how the prior over the network parameters is characterized.
In this sense, one can view the two as different finite-dimensional approximations of the same GP.
However, computing the Mercer prior scales \emph{linearly} in the number of network parameters, which allows for the potential use of larger networks.
While the Mercer prior does contain two integrals which need to be computed, we avoid the curse of dimensionality by deriving a sampling scheme which uses an unbiased estimate of the prior.
This is built with Monte Carlo estimates, which in principle guarantees convergence even for a batch size of $1$ (this is equivalent to setting $D=1$ in the ridgelet prior).

\section{The Mercer prior}
\label{sec:method}

Our method relies on the relationship between GPs and Gaussian measures~\cite{rajput1972gaussian}.
That is, we build the approximation by first thinking of the sample paths of a desired GP as equivalently being drawn from an induced Gaussian measure on a function space.
We then build a finite-dimensional approximation of this measure using neural networks.
The references~\cite{kuo2006gaussian} and~\cite{bogachev1998gaussian} provide a background on the theory of Gaussian measures in infinite-dimensional spaces, and~\cite{williams2006gaussian} serves as an introduction to the theory of Gaussian processes.

Start by defining a GP on $\Omega \subset \mathbb{R}^d$ that we wish to emulate $u \sim \mathcal{GP}(0,k)$, where $k : \Omega \times \Omega \to \mathbb{R}$ is the covariance kernel.
Note that we are choosing to work with a centered GP to simplify some of the details.
A non-zero mean function could be used instead, provided that it satisfies certain regularity constraints.
The resulting formulae would then need to be adjusted.
Alternatively, one could shift the space so that the GP becomes centered.
Further, for the covariance kernel, we enforce the restriction $\int_{\Omega}k(t,t)dt < \infty$.
This is so that we may use the following theorem, which relates the GP to an induced Gaussian measure on $L^2(\Omega)$.
A positive-definite kernel statisfying these properties is sometimes called a Mercer kernel.

\begin{theorem}[Theorem 2~\cite{rajput1972gaussian}]
    \label{thm:GPGM}
    Let $u \sim \mathcal{GP}(m,k)$ be a measurable Gaussian process.
    Then, the sample paths $u \in L^2(\Omega)$ with probability $1$ if and only if
    $$
    \int_{\Omega}m^2(t)dt < \infty, \quad \int_{\Omega}k(t,t) dt < \infty.
    $$
    In this case, $u$ induces the Gaussian measure $\mathcal{N}(m,S)$ on $L^2(\Omega)$ with the covariance operator being $(Sv)(\cdot) = \int_{\Omega} k(\cdot,t)v(t)dt$, for $v \in L^2(\Omega)$.
\end{theorem}

The goal is to sample BNNs from the Gaussian measure $\mathcal{N}(0,S)$ corresponding to the GP with which we started.
That is, we seek to derive a distribution over the BNN parameters $\theta$ such that $u_{\theta} \sim \mathcal{N}(0,S)$.
After doing so, the BNN can serve as a replacement of the GP or Gaussian measure in applications.
To identify such a distribution $p(\theta)$, we apply the principles of information field theory~\cite{ensslin2009information}.
Ultimately, this is inspired by the free theory case of quantum field theory~\cite{glimm2012quantum} which deals with Gaussian measures.
Following ideas from quantum field theory, we express the Gaussian measure using its formal Lebesgue density via Feynman path integrals~\cite{feynman1979path}.

The formal Lebesgue density is derived by mimicking the probability density function of a multivariate Gaussian in the finite-dimensional setting.
That is, adopting the path integral formalism, we write the Gaussian measure $\mu = \mathcal{N}(0,S)$ as
\begin{equation}
    \label{eqn:WM}
    \mu(du) = \frac{1}{\mathcal{Z}} \exp\left(-\frac{1}{2}\left\langle u, S^{-1}u\right\rangle\right)\mathcal{D}u,
\end{equation}
where $\mathcal{Z}$ is the normalization constant of the measure defined in terms of a path integral
$$
\mathcal{Z} = \int_{L^2(\Omega)} \exp\left(-\frac{1}{2}\langle u, S^{-1}u\rangle\right)\mathcal{D}u.
$$

The form of eq.~(\ref{eqn:WM}) cannot immediately be taken to be well-defined, as the Lebesgue measure, which $\mathcal{D}u$ is representing, does not exist on infinite-dimensional Banach spaces.
However, the formalism is still useful for deriving techniques involving Gaussian measures, for which the form can be given a rigorous treatment using limiting procedures.
Such formal manipulations of path integrals can also be treated rigorously using Wick expansions~\cite{nguyen2016perturbative}.
Reconciling these issues is beyond the scope of this work.
Rather, we will work with a finite-dimensional representation of eq.~(\ref{eqn:WM}), where $\mathcal{D}u$ becomes the usual Lebesgue measure, which of course is well defined.

Now, let $u_\theta$ be a neural network with parameters $\theta \in \mathbb{R}^m$ and activation function sufficiently regular so that $u_\theta \in \mathrm{dom}(S^{-1}) $ for any choice $\theta$.
Replacing $u$ by this parameterization and substituting into eq.~(\ref{eqn:WM}), we obtain the following finite-dimensional measure
\begin{equation*}
    \hat{\mu}(d\theta) \propto \exp\left(-\frac{1}{2}\langle u_\theta, S^{-1}u_\theta\rangle\right)d\theta,
\end{equation*}
where $d\theta$ is the Lebesgue measure in $m$ dimensions.
Hence, we are justified in writing a proper probability density function over the network parameters
\begin{equation}
    \label{eqn:NNWM}
    p(\theta) \propto \exp\left(-\frac{1}{2}\langle u_\theta, S^{-1}u_\theta\rangle\right).
\end{equation}

We then numerically characterize eq.~(\ref{eqn:NNWM}) using traditional Monte Carlo sampling schemes in order sample BNNs.
Each sample of $\theta$ provides a new neural network, and we will see that the ensemble statistics approximately follow that of the original Gaussian measure $\mu$.
This offers a significant reduction in computational complexity when the BNN sampled from eq.~(\ref{eqn:NNWM}) replaces a GP for the same task.
As mentioned previously, the complexity of a GP scales with the square of the evaluation mesh.
On the other hand, the BNN prior of eq.~(\ref{eqn:NNWM}) grows with $\theta$.
After sampling, the BNN can be evaluated on a domain with a near-arbitrarily fine mesh, which allows for easy super-resolution.

The catch is that in order to generate a sample of $\theta$ to draw a new BNN, we must evaluate the $L^2(\Omega)$ inner product found in the density.
Also, the inverse of the covariance operator must be dealt with.
Both are defined as integrals, which on first look would be computationally demanding.

\begin{remark}
    We introduce some notation regarding covariance operators.
    Recall the definition of the covariance operator $S$ appearing in Theorem~\ref{thm:GPGM}.
    We will denote the kernel of the inverse covariance operator $S^{-1}$, also known as the precision operator, by $k^{-1}$.
    That is, $S^{-1}$ is defined by
    $$
    (S^{-1}v)(\cdot) = \int_{\Omega} k^{-1}(\cdot,t)v(t)dt, \quad v \in L^2(\Omega),
    $$
    such that $S^{-1}(Sv) = v$.
\end{remark}

We work our way around the need to evaluate the integrals by deriving a sampling scheme which uses stochastic gradient Langevin dynamics (SGLD)~\cite{welling2011bayesian}.
The beauty of SGLD is that samples from $p(\theta)$ can be generated using only unbiased estimates of $\nabla_{\theta} \log p(\theta) = -1/2\nabla_{\theta}\langle u_{\theta},S^{-1}u_{\theta}\rangle$.
This allows us to approximate the various integrals required using sampling averages and SGLD guarantees convergence to the correct probability distribution~\cite{teh2016consistency}.

The integrals are relatively easy, as unbiased estimates can be found in the usual way with importance sampling.
Representing the inverse covariance $k^{-1}$ is a bit more involved, but we derive a representation which is able to exploit the advantages of SGLD.
To do so, we rely on Mercer's theorem to write $k^{-1}$ with a spectral representation of $S^{-1}$.
The assumptions we have placed on $k$, namely that it is a Mercer kernel, imply that $S$ is a self-adjoint, positive operator (this is obvious since $S$ is the covariance operator of a Gaussian measure).
Therefore, $S$ has the spectral decomposition
$$
(Su)(\cdot) = \sum_{n=1}^{\infty} \lambda_n \langle u,\phi_n\rangle \phi_n(\cdot), \quad u \in L^2(\Omega),
$$
where $\{\lambda_n\}_{n\in\mathbb{N}}$ are the eigenvalues of $S$, with $\{\phi_n\}_{n\in\mathbb{N}}$ being the corresponding eigenfunctions.
That is, for each pair $(\lambda_n,\phi_n)$ it follows that
$$
S\phi_n = \lambda_n\phi_n.
$$
Further, each $\lambda_n \geq 0$ and $\lambda_n \to 0$.

We then rely on Mercer's theorem to express the covariance kernel $k$ using the eigenvalues and eigenfunctions of $S$.
\begin{theorem}[Mercer's theorem~\cite{steinwart2008support}]
    \label{thm:Mercer}
    
    Let $k: \Omega \times \Omega \to \mathbb{R}$ be a continuous, positive-definite kernel, and $S$ be the operator on $L^2(\Omega)$ defined by $u \mapsto \int_{\Omega} k(\cdot,t)u(t)dt.$
    Denote the eigenvalues and eigenfunctions of $S$ as $\{\lambda_n\}_{n\in\mathbb{N}}$ and $\{\phi_n\}_{n\in\mathbb{N}}$, respectively.
    Then $k$ can be expressed as
    $$
    k(s,t) = \sum_{n=1}^{\infty} \lambda_n\phi_n(s)\phi_n(t), \quad s,t\in\Omega,
    $$
    where the convergence is absolute and uniform.
\end{theorem}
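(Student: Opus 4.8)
The plan is to deploy the classical spectral-theoretic argument for Mercer's theorem, in which the series is first controlled on the diagonal and then extended off-diagonal by Cauchy--Schwarz. First I would verify that $S$ has the structure needed to invoke the spectral theorem for compact operators. Since $\Omega$ is compact (or, more generally, under the finite-measure and continuity hypotheses in play) and $k$ is continuous, the kernel is bounded, so $S$ is Hilbert--Schmidt and therefore compact; symmetry of $k$ makes $S$ self-adjoint, and positive-definiteness makes $S$ positive. The spectral theorem then supplies a nonincreasing sequence of eigenvalues $\lambda_n \geq 0$ with $\lambda_n \to 0$ and an orthonormal system of eigenfunctions $\{\phi_n\}$ spanning $\overline{\mathrm{ran}(S)}$. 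A short regularity observation is that each $\phi_n$ with $\lambda_n > 0$ admits a continuous representative, since $\phi_n = \lambda_n^{-1}\int_{\Omega} k(\cdot,y)\phi_n(y)\,dy$ is continuous by continuity of $k$ together with dominated convergence.

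Next I would analyze the diagonal series. Writing $k_N(x,x') = \sum_{n=1}^N \lambda_n \phi_n(x)\phi_n(x')$, the central fact is that the remainder $k - k_N$ is itself a positive-definite kernel, since it is the kernel of the positive operator $S - S_N$, where $S_N$ projects onto the first $N$ eigenmodes. A positive-definite kernel is nonnegative on the diagonal, so $k_N(x,x) \leq k(x,x)$ for every $x$, and the partial sums $\sum_{n=1}^N \lambda_n \phi_n(x)^2$ are nondecreasing in $N$ and bounded above by $k(x,x)$. Hence the diagonal series converges pointwise to some limit $g(x) \leq k(x,x)$. The crux is to show $g(x) = k(x,x)$; I would argue that the remainder kernels, being positive-definite with diagonal $k(x,x) - k_N(x,x) \to k(x,x) - g(x)$, cannot have a strictly positive diagonal limit without contradicting $\|S - S_N\| = \lambda_{N+1} \to 0$ and the continuity of the kernels. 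With $g = k(\cdot,\cdot)$ continuous, the monotone pointwise convergence of the continuous functions $k_N(x,x)$ to the continuous limit $k(x,x)$ on a compact set is upgraded to uniform convergence by Dini's theorem.

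Finally I would bootstrap from the diagonal to the full kernel. For $x,x' \in \Omega$ and $M < N$, Cauchy--Schwarz gives
\begin{equation*}
\left| \sum_{n=M+1}^N \lambda_n \phi_n(x)\phi_n(x') \right| \leq \left( \sum_{n=M+1}^N \lambda_n \phi_n(x)^2 \right)^{1/2} \left( \sum_{n=M+1}^N \lambda_n \phi_n(x')^2 \right)^{1/2},
\end{equation*}
so the uniform smallness of the diagonal tails, established by Dini, forces the partial sums $k_N(x,x')$ to be uniformly Cauchy in $(x,x')$. Thus the series converges absolutely and uniformly to a continuous kernel $\tilde{k}$. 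It remains to identify $\tilde{k}$ with $k$: both define the same bounded integral operator, since they act identically on the orthonormal system $\{\phi_n\}$ and both annihilate the null space $\ker(S) = (\overline{\mathrm{ran}(S)})^{\perp}$, so $\tilde{k} = k$ in $L^2(\Omega \times \Omega)$, and continuity of both sides upgrades this to pointwise equality everywhere. I expect the main obstacle to be the diagonal-limit step, namely proving $g(x) = k(x,x)$ and the accompanying continuity of the limit, since this is where positivity, operator-norm convergence, and Dini's theorem must be combined carefully; everything downstream is a routine Cauchy--Schwarz tail estimate.
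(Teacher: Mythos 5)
The paper does not prove this statement; Mercer's theorem is quoted directly from Steinwart and Christmann, so there is no in-paper argument to compare against. Your proposal follows the classical textbook route (positivity of the remainder kernels to control the diagonal, Dini's theorem, then Cauchy--Schwarz off the diagonal), which is essentially the proof in the cited reference, so the architecture is right. The preliminary steps (compactness and positivity of $S$, continuity of the $\phi_n$ with $\lambda_n>0$, the bound $\sum_{n\le N}\lambda_n\phi_n(x)^2\le k(x,x)$, and the final Cauchy--Schwarz bootstrap) are all sound, modulo the standard lemma that a continuous kernel of a positive integral operator is pointwise positive-definite, which you assert but would need to justify by testing against approximate identities.

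The genuine gap is exactly at the step you flag as the crux: showing that the diagonal limit $g(x)=\sum_n\lambda_n\phi_n(x)^2$ equals $k(x,x)$ rather than something strictly smaller. Your proposed argument --- that a strictly positive diagonal limit would ``contradict $\|S-S_N\|=\lambda_{N+1}\to0$ and the continuity of the kernels'' --- rests on a false general principle: operator-norm convergence to zero of continuous positive-definite kernels does not control their diagonals pointwise. The rank-one kernels $r_N(x,x')=\epsilon_N\psi_N(x)\psi_N(x')$, with $\psi_N$ an $L^2$-normalized bump concentrating at a point, have operator norm $\epsilon_N\to0$ while $r_N(x_0,x_0)\to\infty$; the moduli of continuity of the $r_N$ degrade with $N$, which is precisely why no single test function yields the contradiction. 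The standard way to close this step is different: for each fixed $x$, the series $\tilde k(x,\cdot)=\sum_n\lambda_n\phi_n(x)\phi_n(\cdot)$ converges uniformly in the second variable (Cauchy--Schwarz using only the already-established bound $g\le k(\cdot,\cdot)\le\|k\|_\infty$), hence is continuous; the difference $h=k(x,\cdot)-\tilde k(x,\cdot)$ is orthogonal to every $\phi_n$, hence lies in $\ker S$, and a short computation gives $\|h\|_{L^2}^2=\langle k(x,\cdot),h\rangle-\langle\tilde k(x,\cdot),h\rangle=(Sh)(x)-0=0$, so $h\equiv0$ by continuity. Setting $x'=x$ then yields $g(x)=k(x,x)$, after which Dini and your tail estimate go through. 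Note that you cannot recycle your own final $L^2(\Omega\times\Omega)$ identification to patch this, since that step sits downstream of Dini, which itself requires the diagonal identity --- the argument as ordered would be circular.
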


The inverse covariance will have the same eigenfunctions, with the difference being that the eigenvalues are inverted, i.e., $S^{-1}\phi_n = \lambda_n^{-1}\phi_n.$
So, by Mercer's theorem, we may represent the inverse covariance kernel as
\begin{equation}
    \label{eqn:mercer}
    k^{-1}(s,t) = \sum_{n=1}^{\infty} \lambda_n^{-1}\phi_n(s)\phi_n(t), \quad s,t\in\Omega.
\end{equation}
For sampling the BNN, we will use the Mercer representation of $k^{-1}$ in eq.~(\ref{eqn:mercer}).
We will also use importance sampling to write $k^{-1}$ as an expectation, which we then take a sampling average of to build an unbiased estimate for SGLD steps.
As the form of our BNN prior is defined via a Mercer kernel, we refer to it as a Mercer prior.

To this end, let
\begin{equation}
    \label{eqn:prior}
    p(\theta) \overset{\Delta}{=} \exp\left(-\frac{1}{2}\langle u_{\theta}, S^{-1}u_{\theta}\rangle\right)
\end{equation}
be the parameter distribution, where we use $\overset{\Delta}{=}$ to represent equivalence up to the normalization constant.
Since SGLD draws a sample from $p(\theta)$ using $\nabla_{\theta}\log p(\theta)$, the normalization constant vanishes in the derivative and can be dropped.
Putting everything together, we obtain the following unbiased estimator of $\log p(\theta)$.
\begin{proposition}
    \label{prop:prop}
    Let $k$ be a symmetric, positive-definite kernel on $\Omega \times \Omega$, $|\Omega| < \infty$, satisfying $\int_{\Omega} k(t,t)dt < \infty$, and $S$ and $S^{-1}$ be the integral operators with kernels $k$ and $k^{-1}$, respectively.
    Let $u_{\theta} \in \mathrm{dom}(S^{-1})$ be parameterized by $\theta \in \mathbb{R}^m$. 
    Then, a valid unbiased estimator for $\log p(\theta)$ as given by eq.~(\ref{eqn:prior}) is
    \begin{equation}
        \label{eqn:BNNdist}
        \mathcal{S}_{N,M_1,M_2} = -\frac{1}{2} \frac{|\Omega|^2}{NM_1M_2}\sum_{\alpha = 1}^N \frac{1}{\lambda_{n_{\alpha}} p({n_{\alpha}})}\left\{\sum_{\beta = 1}^{M_1} u_{\theta}(t_{\beta})\phi_{n_{\alpha}}(t_{\beta})\right\}\left\{\sum_{\gamma = 1}^{M_2} u_{\theta}(t_{\gamma})\phi_{n_{\alpha}}(t_{\gamma})\right\},
    \end{equation}
    where $t_{\beta} \overset{\mathrm{i.i.d.}}{\sim} U(\Omega)$, $t_{\gamma} \overset{\mathrm{i.i.d.}}{\sim} U(\Omega)$, for $\beta = 1,\dots, M_1$ and $\gamma = 1,\dots, M_2$, and $n_{\alpha}$, $\alpha = 1,\dots,N$, be i.i.d. draws from a discrete random variable with probability mass function $p(n)$, supported on $\mathbb{N}$.
    Here, $\lambda_{n_{\alpha}}$ and $\phi_{n_{\alpha}}$ represent the $n_{\alpha}$-th eigenvalue and corresponding eigenfunction of the operator $S$, respectively.
\end{proposition}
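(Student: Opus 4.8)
The plan is to rewrite the target quantity $-\tfrac12\langle u_{\theta}, S^{-1}u_{\theta}\rangle$ as a triply nested expectation — one expectation for the spectral sum in the Mercer representation, and one for each of the two integrals defining the $L^2(\Omega)$ pairing — and then to recognize each of the three sampling averages appearing in eq.~(\ref{eqn:BNNdist}) as an unbiased estimate of the corresponding expectation. Unbiasedness of the full estimator will then follow from the mutual independence of the three sources of randomness via the tower property.

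First I would unfold the inner product using the definition of $S^{-1}$ as an integral operator with kernel $k^{-1}$ and the Mercer representation of eq.~(\ref{eqn:mercer}), giving
\begin{equation*}
\langle u_{\theta}, S^{-1}u_{\theta}\rangle = \int_{\Omega}\!\int_{\Omega} u_{\theta}(x)\,k^{-1}(x,x')\,u_{\theta}(x')\,dx'\,dx = \int_{\Omega}\!\int_{\Omega} u_{\theta}(x)\,u_{\theta}(x')\sum_{n=1}^{\infty} \lambda_n^{-1}\phi_n(x)\phi_n(x')\,dx'\,dx.
\end{equation*}
Interchanging the sum with the two integrals and setting $c_n \mathrel{\mathop:}= \langle u_{\theta},\phi_n\rangle = \int_{\Omega} u_{\theta}(x)\phi_n(x)\,dx$ collapses this to the diagonal series $\sum_{n=1}^{\infty} \lambda_n^{-1} c_n^2$. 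Justifying this interchange is the one genuinely analytic step: I would appeal to the absolute and uniform convergence of the Mercer series (Theorem~\ref{thm:Mercer}) together with $|\Omega|<\infty$ to apply Fubini--Tonelli, and to the assumption $u_{\theta}\in\mathrm{dom}(S^{-1})$, which guarantees $\sum_n \lambda_n^{-1}c_n^2<\infty$ so that the resulting series is finite.

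With the target in the form $-\tfrac12\sum_n \lambda_n^{-1}c_n^2$, I would estimate each ingredient in turn. For the spectral sum, importance sampling against the pmf $p(n)$ yields $\mathbb{E}_{n\sim p}\!\left[\tfrac{1}{p(n)}\lambda_n^{-1}c_n^2\right]=\sum_n \lambda_n^{-1}c_n^2$, which the average over the $N$ draws $n_{\alpha}$ estimates without bias; here the hypothesis that $p(n)>0$ on all of $\mathbb{N}$ is exactly what makes the reweighting legitimate. For each coefficient, writing $c_n = |\Omega|\,\mathbb{E}_{x\sim U(\Omega)}[u_{\theta}(x)\phi_n(x)]$ converts the integral into an expectation that a uniform-sample average estimates without bias.

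The key maneuver — and the reason two separate batches $\{x_{\beta}\}_{\beta=1}^{M_1}$ and $\{x_{\gamma}\}_{\gamma=1}^{M_2}$ appear — is that $c_n^2$ is a \emph{product} of integrals, and the square of a single unbiased estimate is biased (it overshoots by the estimator's variance). I would therefore estimate $c_n^2$ as the product of two \emph{independent} unbiased estimates of $c_n$, one built from each batch; since the batches are drawn independently, $\mathbb{E}[A B]=\mathbb{E}[A]\,\mathbb{E}[B]=c_n\cdot c_n = c_n^2$. Finally I would assemble the pieces: conditioning on the $n_{\alpha}$ and using that the two batches are independent of each other and of the index draws, together with the mild integrability needed to apply Fubini and the tower property, one obtains $\mathbb{E}\big[\mathcal{S}_{N,M_1,M_2}\big] = -\tfrac12\sum_n \lambda_n^{-1}c_n^2 = -\tfrac12\langle u_{\theta},S^{-1}u_{\theta}\rangle$, which is $\log p(\theta)$ up to the dropped normalization constant. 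I expect the product-of-independent-batches construction to be the conceptual crux, while the sum--integral interchange is the main place where the Mercer hypotheses on $k$ must be used carefully.
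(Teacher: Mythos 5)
Your proposal is correct and follows essentially the same route as the paper's proof: unfold the quadratic form via the Mercer representation of $k^{-1}$, interchange sum and integrals using the uniform convergence guaranteed by Mercer's theorem, and then apply importance sampling separately to the spectral sum (reweighting by $p(n)$) and to each $L^2$ inner product (uniform sampling on $\Omega$). The one point you foreground --- that $c_n^2$ must be estimated by the product of two \emph{independent} batch averages rather than the square of one --- is exactly the paper's reasoning as well, though the paper defers it to the remark following the proposition rather than placing it in the proof body.
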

\begin{proof}
    Explicitly, we have
    $$
    \log p(\theta) = -\frac{1}{2} \int_{\Omega} u(s) \int_{\Omega} k^{-1}(s,t)u(t)dt \:ds + \mathrm{const}.
    $$
    The assumptions on $k$ imply that $S$ is a trace-class operator, hence it is also a Hilbert-Schmidt operator.
    Therefore, the spectral theorem applies and the eigenvalues $\lambda_{n}$ and eigenfunctions $\phi_n$ exist.
    Also, each $\lambda_n$ is real and positive since $k$ is positive-definite.

    We now invoke Mercer's theorem~\ref{thm:Mercer}, which allows us to express $k^{-1}$ in terms of the eigenvalues and eigenfunctions of $S$,
    $$
    k^{-1}(s,t) = \sum_{n=1}^{\infty}\frac{1}{\lambda_n}\phi_n(s)\phi_n(t).
    $$
    Returning to the form of the prior, and inserting the Mercer representation, we find
    \begin{align}
        \label{eqn:series}
        \log p(\theta) &\overset{\Delta}{=} -\frac{1}{2}\int_{\Omega}u(s)\int_{\Omega}\sum_{n=1}^{\infty}\frac{1}{\lambda_n}\phi_n(s)\phi_n(t)u(t)dt\:ds \nonumber  \\ 
        &= -\frac{1}{2}\sum_{n=1}^{\infty}\frac{1}{\lambda_n}\int_{\Omega} u(s)\phi_n(s)ds\int_{\Omega}u(t)\phi_n(t)dt,
    \end{align}
    where interchanging the summation and the integrals is valid since Mercer's theorem guarantees uniform convergence.
    
    For the sampling averages, we employ importance sampling to write each piece of eq.~(\ref{eqn:series}) as an expectation.
    Firstly, if we let $T \sim U(\Omega)$, then we have
    \begin{align*}
        \int_{\Omega} u_{\theta}(t)\phi_n(t)dx & = \int_{\Omega} \frac{|\Omega|}{|\Omega|}u_{\theta}(t)\phi_n(t)dt\\
        &= |\Omega|\mathbb{E}_{T\sim U(\Omega)}[u_\theta(T)\phi_n(T)] \\
        &\approx \frac{|\Omega|}{M_1}\sum_{\beta=1}^{M_1}u_{\theta}(t_{\beta})\phi_n(t_\beta),
    \end{align*}
    where $t_{\beta}$, $\beta = 1,\dots,M_1$ are i.i.d. draws from $U(\Omega)$. 
    This applies for both integrals.
    
    As for the outer summation, taking $p(n)$ to be the probability mass function of any discrete random variable supported on $\mathbb{N}$, we may write
    \begin{align*}
        \sum_{n=1}^{\infty}\frac{1}{\lambda_n}\int_{\Omega} u(s)\phi_n(s)ds\int_{\Omega}u(t)\phi_n(t)dt &= \sum_{n=1}^{\infty}\frac{p(n)}{\lambda_n p(n)}\langle u, \phi_n\rangle\langle u,\phi_n\rangle \\
        &= \mathbb{E}\left[\frac{1}{\lambda_n p(n)}\langle u, \phi_n\rangle\langle u,\phi_n\rangle\right] \\
        &\approx \frac{1}{N}\sum_{\alpha=1}^N\frac{1}{\lambda_{n_\alpha}p(n_{\alpha})}\langle u, \phi_{n_\alpha}\rangle\langle u,\phi_{n_\alpha}\rangle,
    \end{align*}
    where $n_{\alpha}$ with $\alpha = 1,\dots,N$ are i.i.d. draws from $p(n)$, and the expectation is taken over the chosen discrete random variable.
    Inserting the summations into $\log p(\theta)$ and taking care to track the sampling-average constants reveals the desired estimate.
\end{proof}

\begin{remark}
    Observe that in proposition~\ref{prop:prop} we separate the two inner products $\langle u_{\theta},\phi_{n_\alpha}\rangle$, which are essentially the same, into two separate expectations.
    One may be tempted to instead work with one estimate which is then squared.
    However, this is incorrect.
    It is important to separate the inner products into individual estimates which are computed using separate minibatches.
    This is because in general for random variables $X$ and $Y$, $\mathbb{E}[X]\mathbb{E}[Y] = \mathbb{E}[XY]$ only if $X$ and $Y$ are independent.
\end{remark}

Proposition~\ref{prop:prop} provides an unbiased estimator of the Mercer prior as given by eq.~(\ref{eqn:prior}).
The estimate exploits the available spectral representation of the covariance operator $S$ for the measure we wish to draw the BNN from.
In this way, we avoid ever needing to compute the integrals explicitly, identify the inverse covariance kernel analytically, or invert a large covariance matrix as is typical when sampling from a GP.
At any given step, to compute the estimate, we simply need to minibatch points in the domain in order to approximate the integrals and select random eigenvalues/eigenfunctions associated with $S$.
The properties of SGLD guarantee convergence to the correct probability distribution even for all batch sizes equal to one, showing the potential for massive parallelization.
For convenience, we summarize a sampling scheme for the Mercer prior in Algorithm~\ref{alg:mercer}.

In proposition~\ref{prop:prop}, we have chosen to minibatch points in $\Omega$ uniformly, which does not allow the domain to be infinite.
This choice was made in order to simplify the method and it is not a strict requirement.
If one wished to expand the domain to $\mathbb{R}^d$, importance sampling can be used to turn the integrals $\langle u_{\theta}, \phi_n\rangle$ into expectations.
Specifically, if we instead sample $t_{\beta}$ and $t_{\gamma}$ from distributions supported on $\mathbb{R}^d$, $q(t)$ and $w(t)$, respectively, we approximate each inner product using
$$
\langle u_{\theta}, \phi_n\rangle \approx \sum_{\beta = 1}^{M_1} \frac{u_{\theta}(t_{\beta})\phi_n(t_{\beta})}{q(t_{\beta})}, \quad 
$$
where the other inner product takes on the same form.
However, we see no reason to use different importance sampling distributions for the inner products, i.e., we suggest letting $q = w$.

\begin{algorithm}[t]
\caption{Mercer prior sampler}
\label{alg:mercer}
\begin{algorithmic}[1]
\Require 
initial parameters $\theta_0$, total iterations $T$, spectral indices $N$, spatial minibatches $M_1$, $M_2$, initial learning rate $\varepsilon_0$\
\Ensure 
Parameter samples $\{\theta_j\}_{j=0}^T$

\State initialize parameters $\theta_j \gets \theta_0$ and learning rate $\varepsilon_j \gets \varepsilon_0$\;

\State \For{$j = 0$ to $T-1$}
{ 
    sample minibatch 
    $\mathcal{B}_1 = \{t_\beta\}_{\beta=1}^{M_1} \sim U(\Omega)$\;
    sample minibatch 
    $\mathcal{B}_2 = \{t_\gamma\}_{\gamma=1}^{M_2} \sim U(\Omega)$\;
    sample spectral index minibatch 
    $\mathcal{I} = \{n_\alpha\}_{\alpha=1}^{N} \sim p(n)$\;
    sample SGLD noise $\eta_j \sim \mathcal{N}(0, \varepsilon_j I)$\;

    compute inner product estimators:
    \[
        (v_1)_\alpha
        \gets
        \sum_{t \in \mathcal{B}_1}
        u_{\theta_j}(x)\,\phi_{n_\alpha}(t),
        \qquad n_\alpha \in \mathcal{I}
    \]
    \[
        (v_2)_\alpha
        \gets
        \sum_{t \in \mathcal{B}_2}
        u_{\theta_j}(t)\,\phi_{n_\alpha}(t),
        \qquad n_\alpha \in \mathcal{I}\;
    \]

    form unbiased Monte Carlo estimator:
    \[
        \mathcal{S}_{N,M_1,M_2}(\theta_j)
        \gets
        -\frac{1}{2}
        \frac{|\Omega|^2}{N M_1 M_2}
        \sum_{n_\alpha \in \mathcal{I}}
        \lambda_{n_\alpha}^{-1}
        p(n_\alpha)^{-1}
        (v_1)_\alpha (v_2)_\alpha\;
    \]

    SGLD update:
    \[
        \theta_{j+1}
        \gets
        \theta_j
        + \frac{1}{2}\varepsilon_j
        \nabla_\theta \mathcal{S}_{N,M_1,M_2}(\theta_j)
        + \eta_j\;
    \]

    update learning rate $\varepsilon_{j+1}$\;
}
\State \Return parameter samples $\{\theta_j\}_{j=0}^T$
\end{algorithmic}
\end{algorithm}

\subsection{A cruder approximation}
 
We have written proposition~\ref{prop:prop} so that any discrete random variable on $\mathbb{N}$ may be used to sample the eigenvalues and eigenfunctions.
Intuitively, the conventional wisdom suggests to select the distribution so that it decays at the same rate as the eigenvalues.
As an example consider the eigenvalues of Brownian motion which decay like $O(n^{-2})$.
This leads us to believe that the zeta distribution with parameter $2$, i.e.,
$$
p(n) = \frac{6}{\pi^2n^2},
$$
would be an appropriate choice.
For eigenvalues which decay exponentially, one could set $p(n)$ to be the geometric distribution, and so on.
Of course, this discussion is merely formal.

In practice, it is unnecessary to keep all possible eigenvalues in the expansion.
The first reason is that if we allow $p(n)$ to be supported on the entirety of $\mathbb{N}$, there is a non-zero chance that a sufficiently small eigenvalue that degrades the numerical computation could be selected, i.e., the ratio $1/\lambda_n$ could blow up.
Also, there is a certain cut-off point where including additional terms in the expansion does not meaningfully contribute to the approximation.
To see this, observe that the Mercer approximation is expressing the kernel in terms of orthogonal projections onto subspaces defined by the eigenfunctions.
The eigenfunctions associated with larger eigenvalues are more dominant and contribute more of the required information contained in the kernel.
So, if we order the eigenvalues monotonically, i.e., $\lambda_{n+1}\leq\lambda_n$, there is some point where we will have captured enough of this information to build a robust approximation.
This is the same idea as what is found in principle component analysis.

For these reasons, in our applications throughout this work we truncate the expansion coming from the Mercer representation at some finite point $K$.
We show how the performance of the method adjusts for different values of $K$ under the same neural network structure.
Then, we choose for $p(n)$ the discrete uniform distribution, i.e., each integer from $1$ to $K$ is assigned equal probability.
When doing so the unbiased estimate becomes
\begin{equation}
    \label{eqn:sampler}
    \mathcal{S}_{N,M_1,M_2} = -\frac{1}{2} \frac{K|\Omega|^2}{NM_1M_2}\sum_{\alpha = 1}^N \frac{1}{\lambda_{n_{\alpha}}}\left\{\sum_{\beta = 1}^{M_1} u_{\theta}(t_{\beta})\phi_{n_{\alpha}}(t_{\beta})\right\}\left\{\sum_{\gamma = 1}^{M_2} u_{\theta}(t_{\gamma})\phi_{n_{\alpha}}(t_{\gamma})\right\},
\end{equation}
where all things remain equal as with proposition~\ref{prop:prop} except that each $n_{\alpha}$ is sampled from the discrete uniform distribution up to the integer $N$.
This may actually be a poor choice, as the larger eigenvalues are associated with more informative projections, so when building the approximation it is better if  $p(n)$ is decreasing with $n$.
We show that the method works well even under this suboptimal choice.

\subsection{Specifying Gaussian process covariance functions from the ground up}
\label{subsec:kernels}

Observe that the method builds sampling approximations of GPs using only the eigenvalues and eigenfunctions of the covariance kernel.
This means that the covariance kernel does not need to be explicitly written.
The benefit of this is that we may specify the desired GP completely through the eigenvalues and eigenfunctions, which allows us to construct BNNs for specialized tasks.
We describe two possible approaches for constructing a desired kernel.

First, note that given any set of orthonormal functions $\{\phi_n\}_{n=1}^{\infty} \subset L^2(\Omega)$, and any sequence $\{\lambda_n\}_{n=1}^{\infty}$ satisfying $\lambda_n >0$ and $\sum_{n=1}^{\infty}\lambda_n^2 < \infty$, we have that
\begin{equation}
    \label{eqn:GPkernel}
    k(s,t) = \sum_{n=1}^{\infty}\lambda_n\phi_n(s)\phi_n(t)
\end{equation}
defines a valid covariance kernel for a GP.
Note that this also holds if the series is truncated at some finite number of terms.
In this way, we may start by choosing the eigenvalues and eigenfunctions so that the GP follows some desired behavior, e.g., smoothness, which is then transferred into the behavior of the BNN.
The importance of this is that for GP kernels commonly used in machine learning tasks, the eigenvalues and eigenfunctions are currently not known or not easily computed.
As an example, the eigenvalues and eigenfunctions of the squared exponential kernel $k(s,t) = \exp(-\|s-t\|^2/\ell^2)$, which is perhaps the most commonly used kernel, are only available under the Gaussian measure $\mathcal{N}(0,\sigma^2)$, for some variance $\sigma^2> 0$~\cite{zhu1997gaussian}.
The eigenfunctions are then defined via transformations of the Hermite polynomials.
In numerical experiments not covered in this work, we found that we could not include enough of these eigenfunctions in the expansion to build a good approximation of the GP.
This was because after roughly $100$ terms, the computations became very unstable, mostly attributed to the fact that the normalization constants became exceptionally large.

In GP regression, the behavior of the prior covariance kernel greatly impacts the resulting predictions.
Therefore, it is important to understand how predetermined eigenvalues and eigenfunctions impact the resulting sample path behavior.
One can understand the underlying workings of this through the theory of reproducing kernel Hilbert spaces (RKHSs) and the connections to GP regression~\cite{kanagawa2018gaussian}.
RKHSs are defined in the following way:
\begin{definition}[Reproducing kernel Hilbert space~\cite{kanagawa2018gaussian}]
    Let $k$ be a symmetric, positive-definite mapping on $\Omega \times \Omega$.
    The reproducing kernel Hilbert space $H_k$ with inner product $\langle \cdot, \cdot\rangle_{H_k}$ is the Hilbert space on $\Omega$ satisfying
    \begin{enumerate}[(i)]
        \item For all fixed $t \in \Omega$, $k(\cdot,t) \in H_k$.
        \item For all fixed $t \in \Omega$, $u(t) = \langle u, k(\cdot,t)\rangle_{H_k}$ for each $u \in H_k$.
    \end{enumerate}
\end{definition}
One can show that each RKHS $H_k$ is uniquely determined by the kernel $k$ and that for each symmetric, positive-definite mapping $k$, there is a unique RKHS induced by this map~\cite{aronszajn1950theory}.
In this way we observe that when we define a kernel using the form of eq.~(\ref{eqn:GPkernel}), we are implicitly defining a unique RKHS of interest.
Then, one can show that the posterior mean function of GP regression with prior $\mathcal{GP}(0,k)$ will live in the space $H_k$~\cite[Proposition 3.6]{kanagawa2018gaussian}.
Meaning that one can carefully design a kernel so that it matches the assumed behavior of latent target function.
Also of note is that if we work with a finite number of terms in eq.~(\ref{eqn:GPkernel}) so that $\dim(H_k)<\infty$, then there exists a version of $\mathcal{GP}(0,k)$ with sample paths almost surely belonging to $H_k$~\cite[Corollary 4.10]{kanagawa2018gaussian}.
The same does not hold if $H_k$ is infinite-dimensional.

The second method involves damping an existing covariance with a Mercer representation.
One specific target application we have in mind for the Mercer prior is in infinite-dimensional Bayesian inverse problems, where one would look to replace the usual prior Gaussian measure with the corresponding BNN for scalability.
In such inverse problems, one often works with a `Laplace-like' covariance, see~\cite{stuart2010inverse}.
This is first described by the baseline Gaussian measure $\mathcal{N}(0,\Delta^{-1})$, where $\Delta$ is the Laplacian with Dirichlet boundary conditions.
This measure is famously known as the Brownian bridge measure.
On $[0,1]$, the kernel of the inverse Laplacian operator $\Delta^{-1}$ is simply $k(s,t) = \min\{s,t\} - st$.
On the $d$-dimensional unit cube, the kernel is best expressed in terms of its eigenvalues and eigenfunctions, given by
\begin{align*}
    \lambda_{n_1,\dots,n_d} &= \frac{1}{(n_1\cdots n_d)^2\pi^2} \\
    \phi_{n_1,\dots,n_d}(t) &= 2^{d/2}\sin(n_1\pi t)\cdots \sin(n_d\pi t),
\end{align*}
respectively~\cite[Sec. 8.2.2-16]{polyanin2001handbook}.
Hence, by Mercer representation the kernel can be written explicitly as
$$
k(s,t) = 2^d \sum_{n_1=1}^{\infty}\cdots \sum_{n_d=1}^{\infty} \frac{\sin(n_1\pi s)\cdots \sin(n_d\pi s)\sin(n_1\pi t)\cdots \sin(n_d\pi t)}{(n_1\cdots n_d)^2\pi^2}.
$$

This prior produces nondifferentiable sample paths, which is often undesirable when solving inverse problems.
So, we typically work with a damped version of this prior which enforces further smoothness by raising the inverse Laplacian to a power, which is defined as follows.
If $S$ is an operator for which the spectral theorem holds (like $\Delta^{-1}$), we define $S^{\alpha}$, for some $\alpha \in \mathbb{R}$ to be the operator
$$
(S^{\alpha}u)(x) = \sum_{n=1}^{\infty} \lambda_n^{\alpha}\langle u, \phi_n\rangle\phi_n.
$$
To understand how the commonly used prior $\mathcal{N}(0,\Delta^{-\alpha})$ behaves, we state the following two lemmas.
Before doing so, we define the following space, which appears in the second lemma,
$$
\mathcal{H}^s = \left\{ u : \sum_{n=1}^{\infty} \lambda_n^s |\langle u,\phi_n\rangle|^2 < \infty \right\},
$$
and note that $\mathcal{H}^s$ is a subspace of the usual square-integrable Sobolev space of order $s$.
\begin{lemma}[Lemma 6.25~\cite{stuart2010inverse}]
    Let $\Delta$ be the Laplacian on $\Omega$ with Dirichlet boundary conditions, and let $\mu = \mathcal{N}(0, \Delta^{-\alpha})$, for $\alpha > d/2$.
    Then a sample path $u$ of $\mu$ is a.s. $s$-H\"older for any exponent $s < \min(1, \alpha - d/2)$.
\end{lemma}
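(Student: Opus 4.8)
The plan is to apply the Kolmogorov continuity theorem to the centered Gaussian field $u\sim\mathcal{N}(0,\Delta^{-\alpha})$, for which the essential input is a sharp near-diagonal bound on its covariance kernel. First I would record the relevant structure. Writing $\{\nu_n\}$ and $\{\phi_n\}$ for the eigenvalues and eigenfunctions of the Dirichlet Laplacian $\Delta$ on $\Omega$, Weyl's law gives $\nu_n\asymp n^{2/d}$, and the covariance kernel of $\mu$ is the Green's function $K(x,x')=\sum_{n}\nu_n^{-\alpha}\phi_n(x)\phi_n(x')$ of $\Delta^{\alpha}$. Since $\alpha>d/2$, the diagonal series $K(x,x)=\sum_n\nu_n^{-\alpha}\phi_n(x)^2$ converges, so $u$ admits a pointwise-defined version, and for each pair $x,x'$ the increment $u(x)-u(x')$ is a centered Gaussian with variance $d_K(x,x')^2:=K(x,x)-2K(x,x')+K(x',x')$. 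Because the increment is Gaussian, all of its even moments are slaved to this variance: $\mathbb{E}\,|u(x)-u(x')|^{2p}=c_p\,d_K(x,x')^{2p}$ with $c_p=(2p-1)!!$.

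The crux is then the kernel estimate
\[
d_K(x,x')^2\;\lesssim\;|x-x'|^{2s_0},\qquad s_0:=\min\!\left(1,\ \alpha-\tfrac{d}{2}\right).
\]
I would derive this from the local structure of the Green's function of $\Delta^{\alpha}$: near the diagonal it agrees, up to strictly more regular parametrix and boundary corrections, with the Riesz kernel of the free operator $(-\Delta)^{-\alpha}$ on $\mathbb{R}^d$, which behaves like $c_{d,\alpha}\,|x-x'|^{2\alpha-d}$ when $d<2\alpha<d+2$. This gives $d_K(x,x')^2\lesssim|x-x'|^{2\alpha-d}$ in the regime $\alpha-d/2<1$, while once $2\alpha\ge d+2$ the Green's function is $C^2$ near the diagonal, $u$ is mean-square differentiable, and the increment variance is $\lesssim|x-x'|^2$. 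The two regimes combine into the single bound above with exponent $2s_0$.

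Combining the two displays gives $\mathbb{E}\,|u(x)-u(x')|^{2p}\le C_p\,|x-x'|^{2p\,s_0}$ for every $p$. Choosing $p$ large enough that $2p\,s_0>d$ and invoking Kolmogorov's continuity theorem produces a modification of $u$ that is almost surely Hölder of every order $\gamma<s_0-\tfrac{d}{2p}$; intersecting the corresponding almost-sure events along a sequence $p\to\infty$ (equivalently $\gamma\uparrow s_0$) shows that this modification is, on a single full-measure event, $s$-Hölder for every $s<s_0=\min(1,\alpha-d/2)$, which is the assertion. I expect the near-diagonal kernel estimate to be the main obstacle: it is precisely what enforces the threshold $\alpha>d/2$ (so that $2\alpha-d>0$ and the increment variance vanishes as $x'\to x$) and what produces the sharp exponent. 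It genuinely requires the Riesz/parametrix analysis, since the self-contained alternative of bounding the spectral increment sum $\sum_n\nu_n^{-\alpha}\bigl(\phi_n(x)-\phi_n(x')\bigr)^2$ through crude $L^\infty$ and gradient bounds on eigenfunctions is lossy for $d\ge2$ and fails to recover $\alpha-d/2$. For the same reason I would avoid deducing Hölder continuity from almost-sure membership in $\mathcal{H}^{t}$ for $t<\alpha-d/2$ via Sobolev embedding, as that route discards a further $d/2$ of regularity and is not sharp.
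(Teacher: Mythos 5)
The paper does not prove this lemma; it is imported verbatim from the cited reference, so there is no in-paper argument to compare against. Your overall strategy (Gaussian increments, a near-diagonal bound $d_K(x,x')^2\lesssim|x-x'|^{2s_0}$, then Kolmogorov--Chentsov with $p\to\infty$) is sound and would prove the statement, but it differs from the proof in the cited source, which works purely spectrally: one writes $\mathbb{E}|u(x)-u(x')|^2=\sum_n\nu_n^{-\alpha}(\phi_n(x)-\phi_n(x'))^2$, uses the eigenfunction bounds $\|\phi_n\|_\infty\le C$ and $\|\nabla\phi_n\|_\infty\le C\nu_n^{1/2}$ to get $(\phi_n(x)-\phi_n(x'))^2\le C\min(1,\nu_n|x-x'|^2)$, and then interpolates via $\min(1,a)\le a^t$ for $t\in[0,1]$ so that the sum is bounded by $|x-x'|^{2t}\sum_n\nu_n^{t-\alpha}$, which converges by Weyl's law precisely when $t<\alpha-d/2$. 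Your closing claim that this route is ``lossy for $d\ge2$ and fails to recover $\alpha-d/2$'' is therefore incorrect: the interpolation step recovers exactly the exponent $\min(1,\alpha-d/2)$ whenever those eigenfunction bounds hold (they are taken as hypotheses in the reference and hold, e.g., on tori and rectangles); what the spectral route costs is only the extra assumption on the eigenfunctions, not sharpness. Conversely, your route buys domain-generality but at the price of the near-diagonal Green's-function asymptotics for the fractional power $\Delta^{\alpha}$, which you correctly flag as the crux but only assert: for non-integer $\alpha$ the identification of the local singularity with the free Riesz kernel $c_{d,\alpha}|x-x'|^{2\alpha-d}$ plus smoother corrections needs a genuine argument (heat-kernel subordination or a parametrix construction), and at the borderline $2\alpha=d+2$ there is a logarithmic correction that you should note is harmless because the lemma only claims the open range $s<\min(1,\alpha-d/2)$. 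As written, your proposal is a valid but heavier alternative whose key estimate is left unproved, while the standard spectral argument is both more elementary and already sharp.
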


\begin{lemma}[Lemma 6.27~\cite{stuart2010inverse}]
    \label{lemma:sobolev}
    Let $\Delta$ be the Laplacian on $\Omega$ with Dirichlet boundary conditions, and let $\mu = \mathcal{N}(0, \Delta^{-\alpha})$, for $\alpha > d/2$.
    Then a sample path $u$ of $\mu$ a.s. belongs to the space $\mathcal{H}^s$ for any $s \in [0,\alpha - d/2).$
\end{lemma}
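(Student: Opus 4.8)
The plan is to represent the sample path through its Karhunen--Lo\`eve (KL) expansion adapted to the covariance $\Delta^{-\alpha}$ and then reduce the claim to the convergence of a single numerical series. Since the Dirichlet Laplacian $\Delta$ is self-adjoint and positive with eigenpairs $\{(\lambda_n,\phi_n)\}$ (here $\lambda_n$ denotes the eigenvalues of $\Delta^{-1}$, so $\lambda_n^{-1}$ are the eigenvalues of $\Delta$), the covariance $\Delta^{-\alpha}$ shares the eigenfunctions $\phi_n$ and has eigenvalues $\lambda_n^{\alpha}$. By Theorem~\ref{thm:GPGM} the sample path lies in $L^2(\Omega)$ almost surely and admits the expansion
\[
u = \sum_{n=1}^{\infty}\lambda_n^{\alpha/2}\,\xi_n\,\phi_n, \qquad \xi_n \overset{\mathrm{i.i.d.}}{\sim}\mathcal{N}(0,1),
\]
so that $\langle u,\phi_n\rangle = \lambda_n^{\alpha/2}\xi_n$ and $|\langle u,\phi_n\rangle|^2 = \lambda_n^{\alpha}\xi_n^2$. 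This converts the abstract membership question into a statement about the Gaussian coefficients $\xi_n$.

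Next I would compute the expected squared $\mathcal{H}^s$-norm and locate the convergence threshold. Membership in $\mathcal{H}^s$ is equivalent to finiteness of $\sum_n \lambda_n^{-s}|\langle u,\phi_n\rangle|^2$, the $n$-th coefficient being weighted by the $s$-th power of the Dirichlet eigenvalue $\lambda_n^{-1}$. Since all terms are nonnegative, Tonelli's theorem gives
\[
\mathbb{E}\!\left[\sum_{n=1}^{\infty}\lambda_n^{-s}\,|\langle u,\phi_n\rangle|^2\right]
= \sum_{n=1}^{\infty}\lambda_n^{\alpha-s}\,\mathbb{E}[\xi_n^2]
= \sum_{n=1}^{\infty}\lambda_n^{\alpha-s}.
\]
Weyl's law for the Dirichlet Laplacian gives $\lambda_n^{-1}\asymp n^{2/d}$, hence $\lambda_n^{\alpha-s}\asymp n^{-2(\alpha-s)/d}$, and the series converges if and only if $2(\alpha-s)/d>1$, i.e.\ $s<\alpha-d/2$. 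This simultaneously explains the excluded endpoint: at $s=\alpha-d/2$ the exponent equals $-1$ and the series diverges like the harmonic series.

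With a finite expectation in hand, the almost-sure conclusion is immediate for each fixed $s<\alpha-d/2$, because a nonnegative random series with finite expectation is finite almost surely, so $u\in\mathcal{H}^s$ a.s. To upgrade this to the full half-open interval on a single probability-one event, I would exploit the monotonicity $\mathcal{H}^{s'}\subseteq\mathcal{H}^{s}$ for $s'\ge s$ (the weights $\lambda_n^{-s}$ are increasing in $s$): choosing a countable sequence $s_k\uparrow\alpha-d/2$, intersecting the corresponding probability-one events, and noting that every $s<\alpha-d/2$ lies below some $s_k$, yields a single null set off which $u\in\mathcal{H}^s$ for all $s\in[0,\alpha-d/2)$.

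The main obstacle is less the arithmetic than the bookkeeping around the two competing eigenvalue scalings---the decay $\lambda_n^{\alpha}$ inherited from the covariance versus the growth $\lambda_n^{-s}$ built into the $\mathcal{H}^s$ norm---together with the sharp eigenvalue asymptotics that inject the dimensional factor $d/2$; this is precisely where Weyl's law is indispensable and where the threshold originates. A secondary technical point is justifying that the KL series converges \emph{in} $\mathcal{H}^s$ rather than merely formally, but this follows from the same finite-expectation computation, since the partial sums form a Cauchy sequence in $L^2(\Theta;\mathcal{H}^s)$, so no separate argument is required.
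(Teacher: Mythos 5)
Your argument is correct, and it is essentially the standard proof of this result: the paper itself states the lemma without proof, importing it from Stuart's survey, and the argument given there is exactly your Karhunen--Lo\`eve/Weyl's-law computation (expected squared $\mathcal{H}^s$-norm equals $\sum_n \lambda_n^{\alpha-s} \asymp \sum_n n^{-2(\alpha-s)/d}$, finite precisely when $s<\alpha-d/2$, then nonnegativity gives a.s.\ finiteness and a countable intersection handles all $s$ at once). One small point worth flagging: the paper's displayed definition of $\mathcal{H}^s$ uses the weight $\lambda_n^{s}$ with $\lambda_n$ the (decaying) eigenvalues of $\Delta^{-1}$, which as written would not be a subspace of the order-$s$ Sobolev space; you silently and correctly read this as $\lambda_n^{-s}$, i.e.\ the $s$-th power of the Dirichlet-Laplacian eigenvalues, which is the intended meaning and the one consistent with the cited source.
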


So, by starting with the Brownian bridge prior, we may enforce additional inductive biases, like continuity or smoothness, by damping the eigenvalues.
Larger values of $\alpha$ will cause the eigenvalues to decay at a faster rate.
This effectively kills off the higher-order frequencies, allowing the sample paths to be differentiable, as understood by Lemma~\ref{lemma:sobolev}.
This is especially useful in Bayesian inverse problems involving partial differential equations, where we have some prior idea about the desired smoothness of the latent functions.

\subsection{Computational cost analysis}

\begin{figure}[h]
    \centering
    \includegraphics[width=0.5\linewidth]{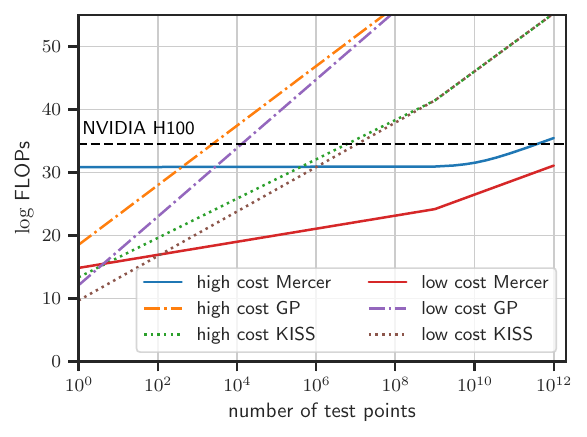}
    \caption{Theoretical cost analysis between the methods.}
    \label{fig:flops}
\end{figure}

We briefly discuss the computational cost of generating a sample from the Mercer prior with one SGLD update using eq.~(\ref{eqn:sampler}).
We let $D$ be the total number of BNN parameters (weights and biases).
For each sampled eigen-index $\alpha = 1,\dots, N$, we need to evaluate two inner summations over $M_1$ and $M_2$ domain points:
$$
\mathcal{S}_{M_1} = \sum_{\beta=1}^{M_1} u_{\theta}(t_{\beta})\phi_{n_{\alpha}}(t_{\beta}),
$$
and similarly for $M_2$.
For simplicity, we will assume that the eigenvalues and eigenfunctions are known analytically and do not need to be numerically computed, e.g., with a Nystr\"om approximation~\cite{williams2000using}.
This holds for the wide class of kernels based on the Laplacian operator commonly used in Bayesian inverse problems, or when engineering a new kernel, both discussed in Sec.~\ref{subsec:kernels}.

Now, each term $u_{\theta}(t)$ requires a forward pass through a neural network, which costs $O(D)$.
Evaluating an eigenfunction $\phi_{n_{\alpha}}(t)$ costs $O(1)$, unless the eigenfunction is a special function, e.g. a Bessel function, which we will ignore.
So, this incurs a cost of $O((M_1 + M_2)D) = O(MD)$ per eigen-index if we take $M_1 = M_2 = M$.
Multiplying by $N$ gives the total leading cost $O(NMD)$.
To be completely fair, we acknowledge the fact that SGLD tends to mix very slowly, so the unbiased estimate needs to be computed many times in order to generate one independent posterior draw.
If we let $T$ denote the number of SGLD steps needed for an independent sample, the total cost is $O(TNMD)$.
The mixing rate can be improved by implementing a variant of SGLD, such as preconditioning~\cite{welling2011bayesian} or cyclical SGLD~\cite{zhang2019cyclical}.
Alternatively, the prior can be characterized by other methods which are known to work with unbiased estimates, such as Hamiltonian Monte Carlo with energy conserving subsampling~\cite{dang2019hamiltonian}, or with variational inference~\cite{blei2017variational}.

To compare the Mercer prior to other methods, we compare the cost in floating point operations per second (FLOPs) of generating a sample from a posterior predictive distribution with a non-Gaussian posterior.
This scenario highlights the Mercer prior's capabilities, especially if the number of test points is large.
To start, let $u \sim \mathcal{GP}(0,k)$ be a Gaussian process, and assume the likelihood involves some nonlinear transformation of $u$ so that the posterior $u | y$ is non-Gaussian, meaning that in all cases sampling is necessary.
Further, we evaluate each resulting sample on a large number of test points, where we will denote the number of test points as $P$.
An example where this situation arises would be in a Bayesian inverse problem with a nonlinear forward map, i.e., the likelihood requires evaluation of a nonlinear PDE.
For simplicity, we will assume that the likelihood costs $O(1)$ per data point.
This is usually not the case, but a likelihood evaluation will cost the same for each method we compare.

If we replace the GP with a BNN equipped with the Mercer prior, generating a sample from the posterior predictive distribution requires the following steps.
First, an independent sample from the BNN posterior is needed, which costs $O(T(NMD + BD))$, where $B$ is the number of minibatch points used in the likelihood (since we sample with SGLD).
Then once an independent sample is identified, we evaluate it on the test points, which costs $O(PD)$.
In total, generating an independent sample from the posterior predictive distribution costs $O(T(NMD + BD) + PD)$.
If we were to use a naive GP, we need to generate a sample from the GP prior on the minibatch points in order to evaluate the likelihood, and then also sample the posterior predictive distribution.
Sampling a GP scales with the cube (for the Cholesky decomposition required to identify the matrix inverse), so this costs in total $O(T(1/3B^3 + B) + 1/3P^3)$, where again, we assume the posterior is sampled with SGLD to exploit minibatching.
For a more scalable GP method, we also consider KISS-GP, which scales linearly in the number of minibatch points and quadratically in the number of test points~\cite{wilson2015kernel}, leading to a cost of $O(2TB + B^2)$.
As there are many hyperparameters to tune, we consider a high cost and low cost scenario, with the values for each reported in Table~\ref{tab:flops}.

In Fig.~\ref{fig:flops}, we compare the computational cost of generating a single sample from the posterior predictive distribution for each method.
We mark where each method crosses the threshold beyond the capabilities of a single NVIDIA H100 GPU, which has a reported computational capability of up to $989$ teraFLOPs.
Even in the high cost case, the BNN with the Mercer prior is able to handle posteriors beyond a billion test points.
In the low cost case, the Mercer prior outperforms the other methods even at the relatively low number of test points close to $100$.

\begin{table}
    \centering
    \begin{tabular}{c c c c c c}
        \text{scenario} & $N$ & $M$ & $D$ & $B$ & $T$ \\
        \hline
        \text{high cost} & $100$ & $10,000$ & $2,500$ & $32$ & $10,000$\\
        \text{low cost} & $10$ & $8$ & $32$ & $8$ & $1,000$
    \end{tabular}
    \caption{Values taken for the cost analysis in the two cases.}
    \label{tab:flops}
\end{table}

\subsection{Proposed directions for hyperparameter selection}
\label{subsec:hyper}
If the covariance operator $S_{\lambda}$ depends on certain kernel hyperparameters $\lambda$, e.g., correlation length or variance, then this dependence carries over into the normalization constant
$$
Z(\lambda) = \int_{\Theta} \exp\left(-E(\theta;\lambda)\right) d\theta, \quad E(\theta;\lambda) \coloneqq \frac{1}{2}\langle u_{\theta}, S^{-1}_{\lambda}u_{\theta}\rangle,
$$
which defines the properly normalized prior $p(\theta|\lambda) = Z(\lambda)^{-1}\exp(-E(\theta;\lambda))$.
To connect the technique back to its inspiration in field theories, $E$ can be viewed as the energy of the system, and $Z(\lambda)$ is the corresponding partition function.
Under this view, the prior assigns higher probability density to neural networks which are closer to the minimum energy state.

A fully Bayesian treatment over both $\theta$ and $\lambda$ requires accounting for the variation of $Z(\lambda)$, as the joint posterior would be
$$
p(\theta,\lambda|y) \propto p(y|\theta)p(\theta|\lambda)p(\lambda) = \frac{1}{Z(\lambda)}p(y|\theta)\exp\left(-E(\theta;\lambda)\right)p(\lambda),
$$
once one picks a hyperprior $p(\lambda)$.
Of course, this is unlikely to be tractable due to the complicated map defined by the neural network architecture within $Z(\lambda)$.
Ignoring this subtlety and dropping the partition function introduces bias into the inference whenever $\lambda$ is learned jointly with $\theta$.
In our examples in this work, we will keep $\lambda$ fixed so that $Z(\lambda)$ is simply a constant and can be dropped when sampling with SGLD.
Our primary concern is how the Mercer prior performs as an approximation to GPs in applications, not necessarily on how any hyperparameters are learned.

If desired, one approach to train any hyperparameters would be to work with the marginal $p(\lambda|y) = \int_{\Theta}p(\theta,\lambda|y)d\theta$, which can be characterized through the following identity
\begin{equation}
    \label{eqn:energy}
    \nabla_{\lambda} \log p(\lambda|y) = \mathbb{E}_{p(\theta|y,\lambda)}[-\nabla_{\lambda}E(\theta;\lambda)] - \mathbb{E}_{p(\theta|\lambda)}[-\nabla_{\lambda}E(\theta;\lambda)] - \nabla_{\lambda}\log p(\lambda),
\end{equation}
which is derived in~\cite{alberts2023physics}.
Let us consider the simplified case where $p(\lambda)$ is flat, so that only the expectations appear.
Then, the practice reduces to model selection by marginal likelihood~\cite{williams2006gaussian}.
This expression states that the gradient of the $\log$-evidence with respect to the hyperparameters is the difference between the expected energy gradients under the posterior and prior, respectively.
Intuitively, the first term dictates how the energy landscape shifts under the posterior, i.e., it points towards the direction of $\lambda$ which reduces the energy.
The second term is a correction coming from the change in normalization constant.
It is a baseline shift in energy that happens even without data.
Again to connect back to physics, the derivative of the energy $\nabla_{\lambda}E(\theta;\lambda)$ can be viewed as a force, so eq.~(\ref{eqn:energy}) describes the expected excess force that $\lambda$ is subjected to induced by the data.
Probabilistically, this is exactly the score function gradient estimator for the likelihood $\nabla_{\lambda}\mathbb{E}_{p(\theta|\lambda)}[p(y|\theta)] = \mathbb{E}_{p(\theta|\lambda)}[p(y|\theta)\nabla_{\lambda}\log p(\theta|\lambda)]$~\cite{williams1992simple}.
This can be seen by first observing that
\begin{align*}
    \nabla_{\lambda}\log p(\theta | \lambda) &= -\nabla_{\lambda}E(\theta;\lambda) - \nabla_{\lambda} \log Z(\lambda) \\
    & = -\nabla_{\lambda}E(\theta;\lambda) + \mathbb{E}_{p(\theta|\lambda)}[\nabla_{\lambda} E(\theta;\lambda)],
\end{align*}
by using the property of the partition function that $\nabla_{\lambda} \log Z(\lambda) = - \mathbb{E}_{p(\theta|\lambda)}[\nabla_{\lambda} E(\theta;\lambda)]$.
Taking the posterior expectation reveals exactly the form of eq.~(\ref{eqn:energy}).

In practice, the identity eq.~(\ref{eqn:energy}) enables empirical Bayes optimization of $\lambda$ using Monte Carlo estimates of the two expectations, which can be obtained via SGLD samples from the posterior and prior with fixed hyperparameters.
A fully Bayesian approach would be possible with nested sampling as found in~\cite{alberts2023physics}.
This approach uses alternating SGLD updates for $\theta$ to approximate the expectations, which then inform an SGLD step for $\lambda$ in order to sample the joint posterior without explicitly computing $Z(\lambda)$.
While exact, methods that require evaluating or estimating $Z(\lambda)$, such as pseudo-marginal Metropolis-Hastings, are typically computationally demanding.
One could also optimize $\lambda$ using the score function gradient estimator, e.g., with Adam~\cite{kingma2014adam}.

\section{Case study: drawing neural networks from Brownian motion}
\label{sec:study}

\begin{figure}[htbp]
    \centering
    \begin{subfigure}[b]{0.475\textwidth}
        \centering
        \includegraphics[width=\textwidth]{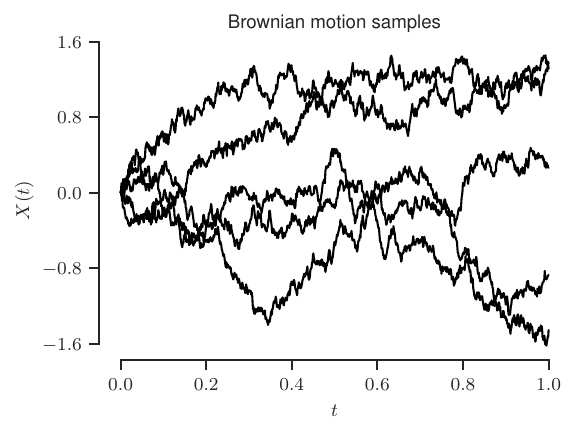}
        \caption{Exact samples}
    \end{subfigure}
    \hfill
    \begin{subfigure}[b]{0.475\textwidth}
        \centering
        \includegraphics[width=\textwidth]{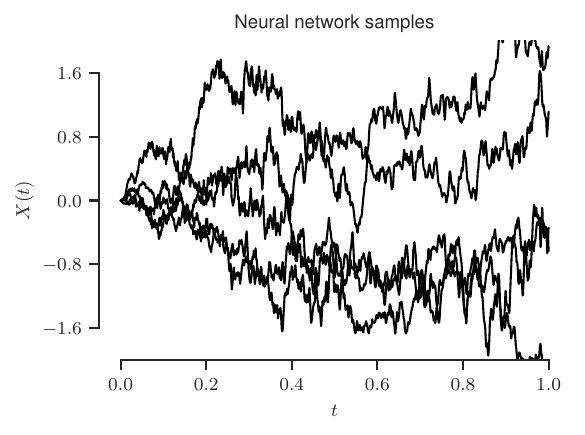}
        \caption{Bayesian neural network with Mercer prior}
    \end{subfigure}
    \caption{Comparison between true Brownian motion and samples generated with the Mercer prior. The samples are generated with $K = 1,000$ eigenvalues in the prior and each sample is evaluated at $100,000$ points.}
    \label{fig:BMsamples}
\end{figure}

To illustrate how the Mercer prior performs in practice, we study how the prior can be use to draw BNNs which follow Brownian motion in great detail.
Let $C_0\left([0,1]\right)$ be the space of real-valued continuous functions $u$ on $[0,1]$ with the property $u(0) = 0$.
We say that a stochastic process follows Brownian motion if it is a sample path from the centered GP with covariance $k(s,t) = \min(s,t)$.
We then define a covariance operator $S$ on $C_0\left([0,1]\right)$ by 
\begin{equation}
    \label{eqn:covop}
    (Su)(t) = \int_0^1 \min(s,t)u(s)ds.
\end{equation}
Then, a Brownian motion $X(t)$ is equivalently a sample path of the Gaussian measure on $C_0\left([0,1]\right)$ with mean $0$ and covariance operator $S$ given  by eq.~(\ref{eqn:covop}).
So, the goal is to draw neural networks which resemble sample paths of the Gaussian measure $\mu \sim \mathcal{N}(0,S)$.
The measure $\mu$ is of course the well-known classical Wiener measure~\cite{kuo2006gaussian}.

On $[0,1]$, the eigenvalues and normalized eigenfunctions of the classical Wiener measure are respectively given by
\begin{align*}
    \lambda_n &= \pi^{-2}(n-0.5)^{-2}, \\
    \phi_n(t) &= \sqrt{2}\sin((n-0.5)\pi t),
\end{align*}
for $n \in \mathbb{N}$.
For the field parameterization, we let $u_{\theta}(t) = tf(t;\theta)$, where $f$ is a single-layer neural network with the sigmoid activation function $\sigma(t) = (1+ e^{-t})^{-1}$.
We use this specific parameterization to automatically satisfy the condition that $u_{\theta}(0) = 0$ in order to follow a Brownian motion.
This ensures that each BNN sample belongs to the space $C_0([0,1])$.
Throughout the experiments, we vary both the width of the network and the number of kernel terms $K$.
In addition to this, we add a Fourier feature input layer to the networks~\cite{tancik2020fourier}.
This is in effort to combat the spectral bias of neural networks and allows the parameterization to more easily capture the characteristic high-frequency nature of Brownian motion.
The inclusion of the Fourier feature layer helps SGLD identify the high-probability region faster.
In principle, this could be omitted, although this would need significantly more samples of SGLD to truly characterize the posterior.
We use the open source package Equinox~\cite{kidger2021equinox}, which is built on top of JAX~\cite{jax2018github}, for implementation, and for sampling with SGLD, we use the samplers found in the Blackjax~\cite{cabezas2024blackjax} library.

Our highest-fidelity approximation is presented in Fig.~\ref{fig:BMsamples}, where we provide evidence that, at least qualitatively, our method produces samples with the same behavior as Brownian motion.
In this case, we truncate the Mercer representation of $k$ to $1,000$ eigenvalues and eigenfunctions.
When evaluating the unbiased estimate of eq.~(\ref{eqn:sampler}) in order to produce an SGLD step, we subsample $100$ eigenvalues and eigenfunctions and use $10,000$ uniformly distributed points in $[0,1]$ for the $L^2$-inner products.
Each SGLD sample produces a different neural network which resembles draws from the classical Wiener measure.
Since the sample paths are represented as neural networks, they can be evaluated on an arbitrarily fine mesh without the cumbersome need to invert a large covariance matrix.
Further, the subsampling capabilities of SGLD allow us to efficiently draw many samples, as the computations can be done in parallel.
For this figure, we generate $100,000$ neural network samples each evaluated at $100,000$ points evenly spaced in $[0,1]$, and select $5$ randomly for plotting.

\begin{figure}[htbp]
    \centering
    \begin{subfigure}[b]{0.475\textwidth}
        \centering
        \includegraphics[width=\textwidth]{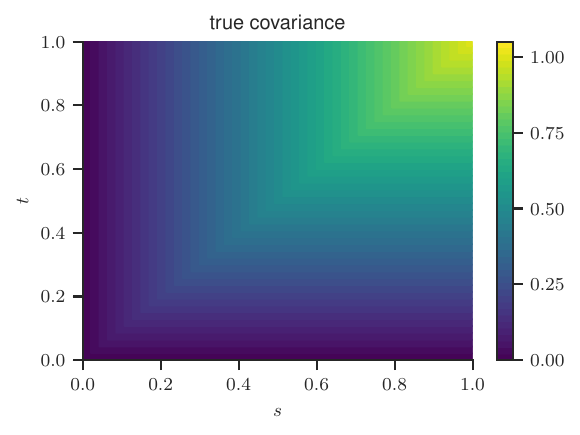}
        \caption{Exact covariance function.}
    \end{subfigure}
    \hfill
    \begin{subfigure}[b]{0.475\textwidth}
        \centering
        \includegraphics[width=\textwidth]{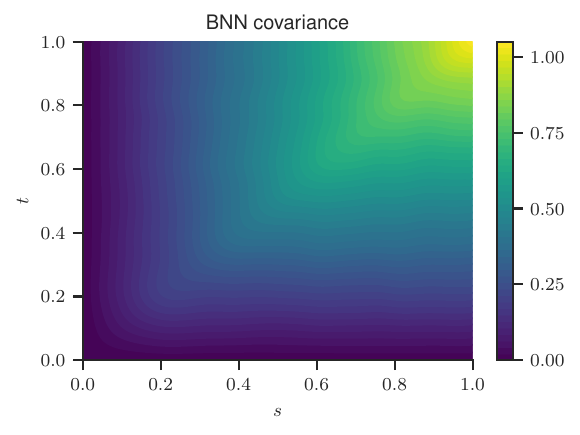}
        \caption{Empirical BNN covariance function.}
    \end{subfigure}
    \caption{Comparison between the true covariance function of Brownian motion $k(s,t) = \min(s,t)$ and the empirical covariance generated with the Mercer prior. The approximation is generated with $K = 1,000$ eigenvalues with a width of $1,000$ neurons.}
    \label{fig:BMcov}
\end{figure}
One way to verify that the BNNs equipped with the Mercer prior are in fact approximately drawn from the Wiener measure is to numerically compare their output statistics to that of true Brownian motion.
Perhaps the most important statistic to check is whether or not the empirical covariance of our BNN samples matches that of Brownian motion.
The empirical covariance resulting from the BNN samples is calculated in the following manner.
For each BNN sample, let $u_{\theta_i}$ be the evaluation of the resulting network on a regular grid of with $m$ points contained in $[0,1]$, for $i=1,\dots, N_F$.
This is produces a random vector in $\mathbb{R}^m$ for each BNN sample $u_{\theta_i}$, so the empirical covariance can be computed in the usual manner.
Denoting by $\bar{u}_{\theta} = \frac{1}{N_F}\sum_{i=1}^{N_F}u_{\theta_i}$ the empirical mean of the ensemble, the empirical covariance matrix $Q$ is given by
$$
Q = \frac{1}{N_F -1}\sum_{i=1}^{N_F}(u_{\theta_i} - \bar{u}_{\theta})(u_{\theta_i} - \bar{u}_{\theta})^{\top}.
$$

In Fig.~\ref{fig:BMcov}, we compare the true covariance function of Brownian motion $k(s,t) = \min(s,t)$ to the empirical covariance matrix of the BNN samples.
When calculating the empirical covariance, we take $N_F = 100,000$ samples, with each network being evaluated on $m =1,000$ points.
We then perform two statistical tests to qualitatively assess the performance of the method.
First, we directly compare the true covariance of Brownian motion to the empirical covariance obtained by the Mercer prior.
The matrix of absolute errors for this case is shown in Fig.~\ref{fig:coverr1000}.
We report a maximum error of $\approx 0.43$, which is located at the tip $s=t=1$.
This represents a discrepancy of less than $5\%$.
Recall that (centered) GPs are uniquely determined by their covariance functions.
This level of agreement between the covariances of true Brownian motion and our BNNs therefore suggests that the BNNs are in fact approximately sampled from the Wiener measure.

In the same manner, we also check that the distribution of BNNs on specific time slices matches true Brownian motion.
At any given time $t$, the possible values of a Brownian motion sample are distributed according to $\mathcal{N}(0,t^2)$.
So, if we collect the output of the BNN samples at a fixed point in time, the result should be a distribution which closely follows this normal distribution.
We present the results of this test for $4$ evenly-spaced points in time between $[0.25,1]$ in Fig.~\ref{fig:brownhist}, which are generated with a kernel density estimation.
It is evident that at each slice of time, the distribution of BNN values matches that closely of true Brownian motion.
While both of these statistical tests are mostly qualitative, they provide strong numerical evidence that our prior works as advertised.
In the following subsections we provide more quantitative evidence.

\begin{figure}[h]
    \centering
    \begin{subfigure}[b]{0.475\textwidth}
        \centering
        \includegraphics[width=\textwidth]{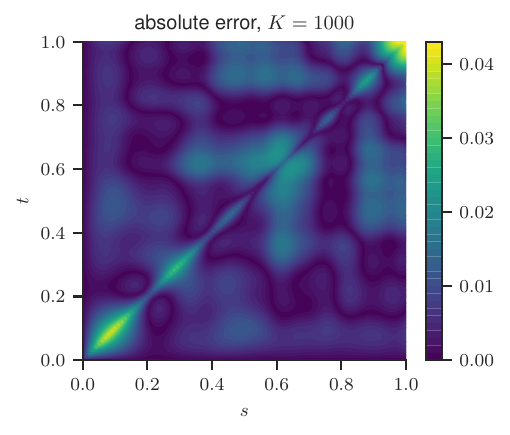}
        \subcaption{Absolute error between the true covariance of Brownian motion and the empirical covariance resulting from the BNN samples drawn from the Mercer prior.}
        \label{fig:coverr1000}
    \end{subfigure}
    \hfill
    \begin{subfigure}[b]{0.475\textwidth}
        \centering
        \includegraphics[width=\textwidth]{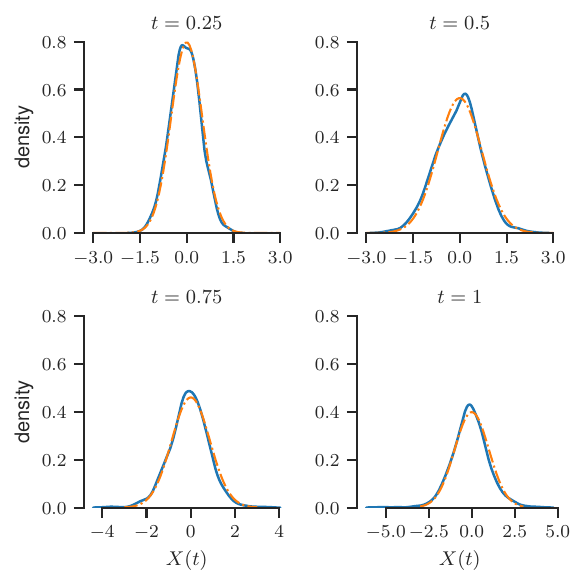}
        \subcaption{Comparison between the distribution of BNN samples on time slices and $\mathcal{N}(0,t^2)$.}
    \label{fig:brownhist}
    \end{subfigure}
    \caption{Statistical tests for BNNs which follow Brownian motion.}
\end{figure}

The main difference between the BNNs and true Brownian motion is that the BNNs still have smoothness to them.
We can observe this in the covariance functions, as the BNN covariance is a bit more blurred.
This can also be seen by the fact that most of the error in the covariance concentrates along the center line $s=t$, which suggests the BNNs have difficultly in capturing the extreme frequencies.
This fact should not be a surprise, as the finite-width neural network will be infinitely differentiable for any set of parameters $\theta$ since we have chosen the sigmoid activation function, while Brownian motion is famously nowhere differentiable.
One may therefore be tempted to replace the architecture with one which produces non-differentiable neural networks.
However, doing so will lead to an ill-defined prior, even for a finite-width neural network.
Recall that before building an unbiased estimate, the BNN prior looks like $p(\theta) \propto \exp\left(-1/2\langle u_\theta, S^{-1}u_\theta\rangle\right)$, where we then express $S^{-1}$ in terms of its Mercer representation.
For Brownian motion, one can show that this prior also looks like
$$
p(\theta) \propto \exp\left(-\frac{1}{2} \int_0^1 \left(\frac{du_{\theta}}{dt}\right)^2dt\right),
$$
which can be found in~\cite{adler2010geometry}.
Hence, for the prior to even be well-defined, the neural network must differentiable so that the density can be computed.
At best we can produce approximations of Brownian motion with at least one derivative which limit to non-differentiable functions.
This sort of dichotomy is not unusual when dealing with Gaussian measures.
As it turns out, this regularity requirement is exactly the statement that the BNN must a priori live in the RKHS of the GP prior, for which the associated Gaussian measure assigns zero probability~\cite{kanagawa2018gaussian}.
Note that this problem disappears if the RKHS is finite-dimensional.

\subsection{Dependence of the prior on the eigenvalues and eigenfunctions}
\label{subsec:evals}

When numerically computing the prior there are two main sources of the error between the BNN samples and the true GP paths.
The first is that we have chosen to truncate the number of eigenvalues and eigenfunctions appearing in the Mercer representation to some finite value $K$.
The second comes from the fact that we cannot use an infinite-width neural network, i.e., for each layer, we use $N_s <\infty$ neurons.
Intuitively, we should expect convergence of the BNN to the desired GP in some limit as both $K$ and $N_s$ go to infinity.
The convergence in $K$ is trivial by Mercer's theorem.
Regardless, we numerically investigate both pieces, starting with the eigenvalues and eigenfunctions.

Within spectral expansion methods, particularly for the KLE, it is conventional wisdom to truncate the expansion at a point which captures a desired `energy' threshold.
That is, we select the truncation value according to a formula such as
$$
\frac{\sum_{n=1}^K \lambda_n}{\sum_{n=1}^\infty \lambda_n} = 0.99.
$$
When using the KLE, this represents an approximation which captures $99\%$ of the variance of the GP.
For Brownian motion, we calculate this as approximately $20$ eigenvalues.

\begin{figure}[htbp]
    \centering
    \begin{subfigure}[b]{0.475\textwidth}
        \centering
        \includegraphics[width=\textwidth]{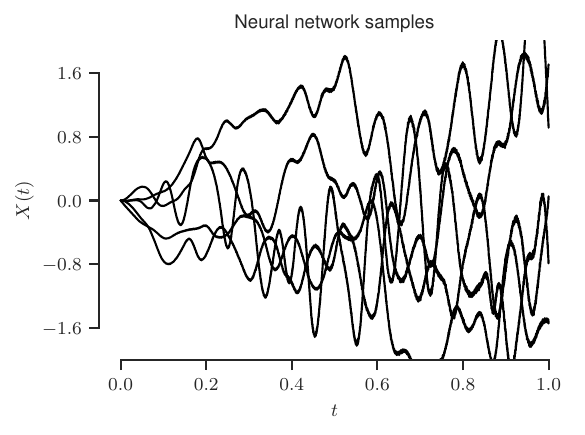}
        \caption{BNN samples generated with $K = 20$.}
    \end{subfigure}
    \hfill
    \begin{subfigure}[b]{0.475\textwidth}
        \centering
        \includegraphics[width=\textwidth]{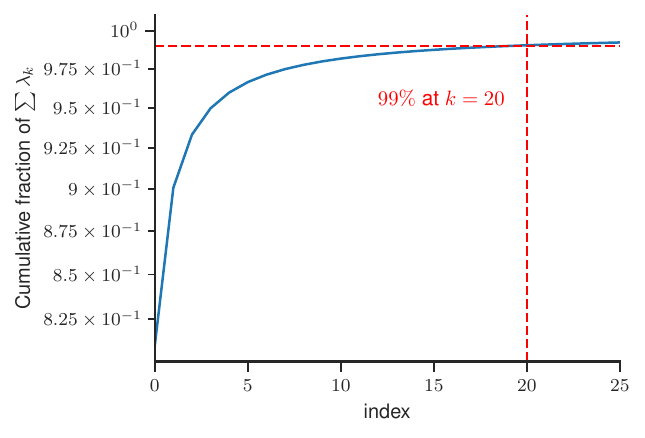}
        \caption{Cumulative ratio of Brownian motion eigenvalues.}
    \end{subfigure}
    \caption{BNNs sampled from the Mercer prior for Brownian motion with $K=20$ eigenvalues and eigenfunctions. On the left in (a), we plot $5$ BNN samples, and on the right in (b), we show the evolution of the cumulatize ratio of eigenvalues.}
    \label{fig:BMsamps20}
\end{figure}

\begin{figure}[htbp]
    \centering
    \includegraphics[width=\linewidth]{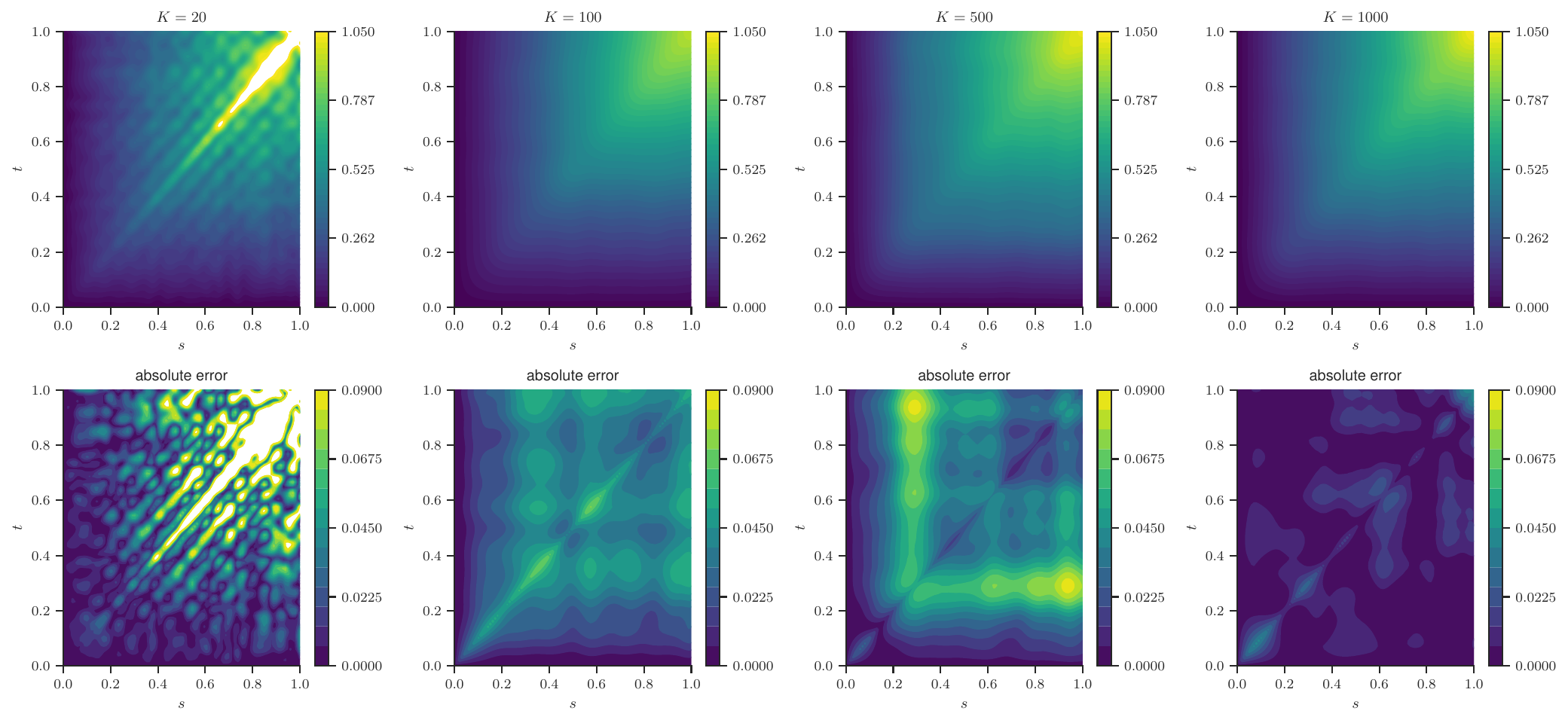}
    \caption{BNN empirical covariance matrices for adjusting values of $K$. The top row shows the empirical covariance matrix for each case, and the bottom row shows the corresponding absolute error between the empirical covariance matrix and the covariance of true Brownian motion.}
    \label{fig:eigentest}
\end{figure}

Following this wisdom, we investigate how the method performs when the expansion is truncated at $20$ terms.
The samples produced in this way are presented in Fig.~\ref{fig:BMsamps20}.
We see that although the samples follow the same overall trend of Brownian motion, the method is unable to capture the finer features.
This is evidence that including the higher-order eigenvalues/eigenfunctions are necessary in order to effectively mimic the higher frequencies of Brownian motion.

\begin{figure}[h]
    \centering
    \begin{subfigure}[b]{0.475\textwidth}
        \centering
        \includegraphics[width=\textwidth]{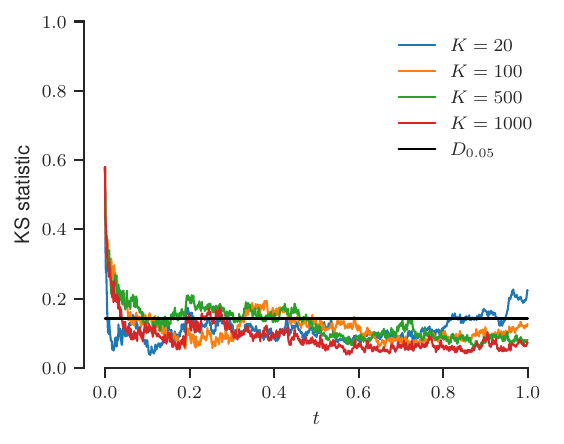}
    \end{subfigure}
    \hfill
    \begin{subfigure}[b]{0.475\textwidth}
        \centering
        \includegraphics[width=\textwidth]{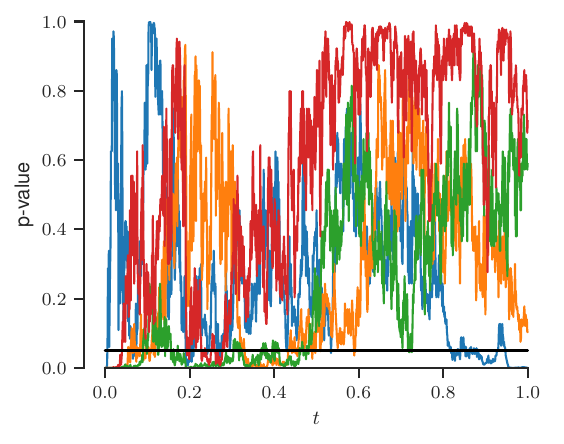}
    \end{subfigure}
    \caption{Results of the two-sample KS test between Brownian motion and the BNN samples. Here, we set the significance level $\alpha = 0.05$.}
    \label{fig:KSevals}
\end{figure}

In Fig.~\ref{fig:eigentest}, we show how the empirical covariance of the BNN samples change according to $K$.
For each case, we use an identical architecture, take the same number of samples, and use the same mesh.
When evaluating the unbiased estimate of $\log p(\theta)$, we also subsample the same number of points in the domain, and subsample $10\%$ of the eigenvalues and corresponding eigenfunctions.
For the small $K$ cases ($K=20$ and $K=100$), the majority of the error is along the centerline $s=t$, with vanishing error as $s$ and $t$ grow further apart.
This shows that the BNNs are capturing the overall trend of Brownian motion, which can also be viewed in Fig.~\ref{fig:BMsamps20}, i.e., the covariance between any two time points $s$ and $t$ agrees with true Brownian motion on a long time scale.
However, without the higher-order terms, the BNN is unable to capture the instantaneous sharp changes, which is why there is significant error along the diagonal.

Finally, to quantitatively measure how close the BNN samples fall to true Brownian motion, we perform a two-sample Kolmogorov-Smirnov (KS) test.
Since the KS statistic is defined for one-dimensional probability distributions, and we are dealing with continuous stochastic processes, we must design the test accordingly.
For this reason, we calculate the KS statistic between $n_1$ samples of Brownian motion and $n_2$ BNN samples at fixed values in time and track the evolution of the KS statistics.
The two are considered to be drawn from the same distribution if the KS statistic falls below the critical value
$$
D_{\alpha} = c(\alpha) = \sqrt{\frac{n_1 + n_2}{n_1n_2}},
$$
where $c(\alpha)$ is a coefficient depending on $0<\alpha<1$, the significance level.
That is, we are testing the two hypotheses
\begin{align*}
    &H_0:\text{ the two samples are drawn from the same distribution} \\
    &H_1: \text{ the two samples are drawn from different distributions,}
\end{align*}
and $\alpha$ determines the $p$-value.
So, as an alternative to $D_{\alpha}$, if we fail to reject the null hypothesis, then we consider the two distributions to be equal.

The results of the two-sample KS test are presented in Fig.~\ref{fig:KSevals}.
For the test, we set the significance level to be $\alpha = 0.05$.
The KS statistic suggests that the marginal distributions of the BNN match that of Brownian motion when approximately $1,000$ eigenvalues/eigenfunctions are accounted for.
We feel uncomfortable making this claim for $K = 500$, as there is a region where the KS statistic largely falls above $D_{0.05}$ (for $t\in(0.2,0.4)$).
The same holds for the $p$-values. 
What is notable is that in all cases, the KS statistic lies outside the acceptable threshold when $t$ is close to $0$.
We attribute this to the specific parameterization we have chosen, i.e., $u_{\theta}(t) = tf(t;\theta)$, which could be forcing the BNN to remain too small close to $0$.
Again, we made this choice so that the initial condition $u_{\theta}(0)=0$ is satisfied to ensure that the operation $S^{-1}u_{\theta}$ is well-defined.

\subsection{Dependence of the prior on the network width}

\begin{figure}
    \centering
    \includegraphics[width=\linewidth]{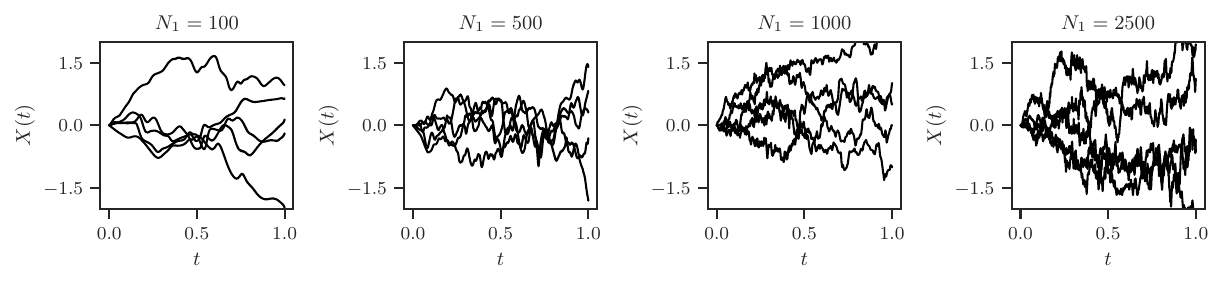}
    \caption{BNN samples from the Brownian motion Mercer prior with varying width.}
    \label{fig:widthtest}
\end{figure}

Next, we numerically investigate how the BNN samples behave as a function of the network width.
For this set of experiments, we select $K=1,000$ eigenvalues and eigenfunctions in the Mercer kernel representation, since this was found to give the best performance as previously studied.
We also use only a single layer neural network, as we find adding additional layers degrades the approximation.

\begin{figure}[h]
    \centering
    \includegraphics[width=\linewidth]{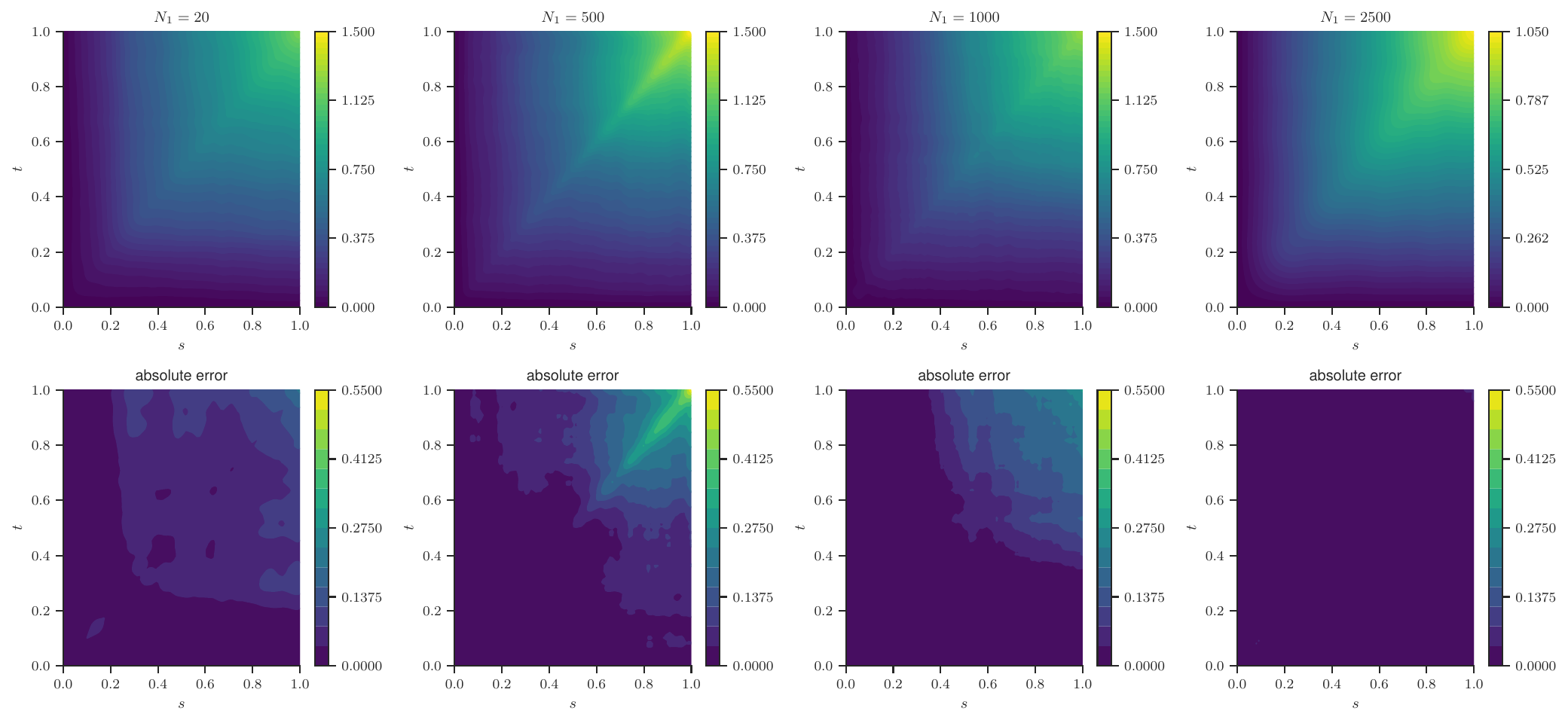}
    \caption{BNN empirical covariance matrices for varying width. The top row shows the empirical covariance matrix for each case, and the bottom row shows the corresponding absolute error between the empirical covariance matrix and the covariance of true Brownian motion.}
    \label{fig:covwidth}
\end{figure}

\begin{figure}[h!]
    \centering
    \begin{subfigure}[b]{0.475\textwidth}
        \centering
        \includegraphics[width=\textwidth]{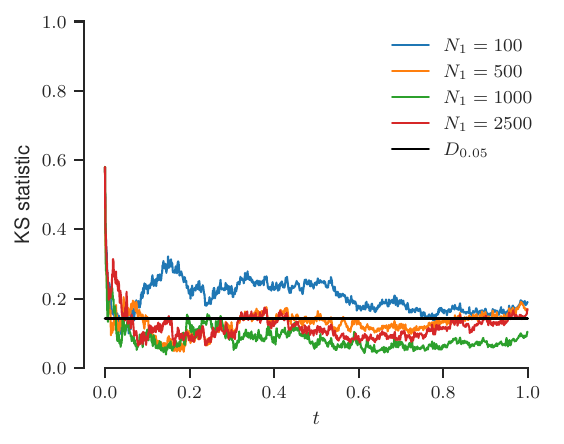}
    \end{subfigure}
    \hfill
    \begin{subfigure}[b]{0.475\textwidth}
        \centering
        \includegraphics[width=\textwidth]{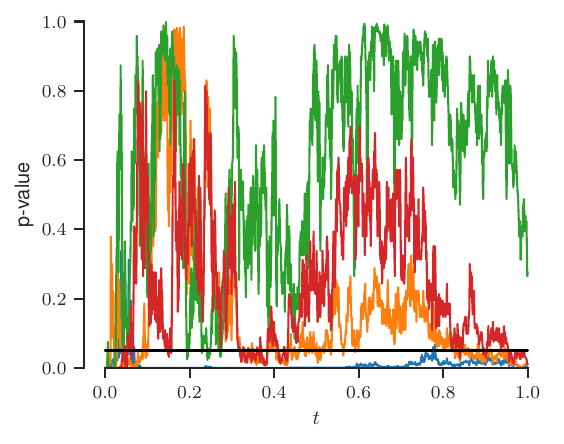}
    \end{subfigure}
    \caption{Results of the two-sample KS test between Brownian motion and the BNN samples for neural networks of different widths. We set the significance level $\alpha = 0.05$.}
    \label{fig:KSwidth}
\end{figure}

In Fig.~\ref{fig:widthtest}, we show a few BNN samples for networks with increasing width, and in Fig.~\ref{fig:covwidth}, we show the corresponding empirical covariance matrices along with the absolute error.
We observe that in general, wider neural networks perform better, which is to be expected.
We also perform a two-sample KS test for each case in the same manner as before, the results of which are presented in Fig.~\ref{fig:KSwidth}.

The results of these numerical experiments show that the width of the neural network plays a key role in how well the BNN prior approximates the target GP.
As the network becomes wider, the approximation becomes higher-fidelity as evidenced in the empirical covariance matrices.
However, the KS test suggests that the improvement plateaus, as both the $1,000$ neuron and $2,500$ neuron network yield similar results.
This suggests diminishing returns once the network is expressive enough relative to the number of eigenfunctions used.
The use of a wider network will reduce the error, but this comes at an increased computational cost.
Together with the results of Sec.~\ref{subsec:evals}, this highlights that both the spectral truncation $K$ and the network width $N_1$ jointly determine the quality of the approximation, with convergence expected in the limit $K,N_1\to\infty$.

\subsection{The Brownian bridge}

To further illustrate how the choice of the covariance operator shapes the behavior of the Mercer prior, we briefly show the method as applied to the Brownian bridge on $[0,1]$.
The Brownian bridge $X(t)$ is defined by conditioning standard Brownian motion with the additional constraint that $X(1) = 0$.
This can be described as the centered GP with covariance kernel $k(s,t) = \min(s,t)-st$.
The eigenvalues of this process are $\lambda_n = (n\pi)^{-2}$ with corresponding eigenfunctions $\phi_n(t) = \sqrt{2}\sin(n\pi t)$.
\begin{figure}[h]
    \centering
    \begin{subfigure}[b]{0.475\textwidth}
        \centering
        \includegraphics[width=\textwidth]{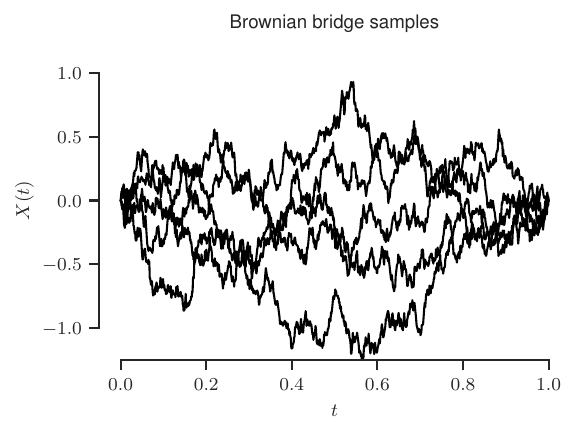}
        \caption{Exact samples from the unit Brownian bridge.}
    \end{subfigure}
    \hfill
    \begin{subfigure}[b]{0.475\textwidth}
        \centering
        \includegraphics[width=\textwidth]{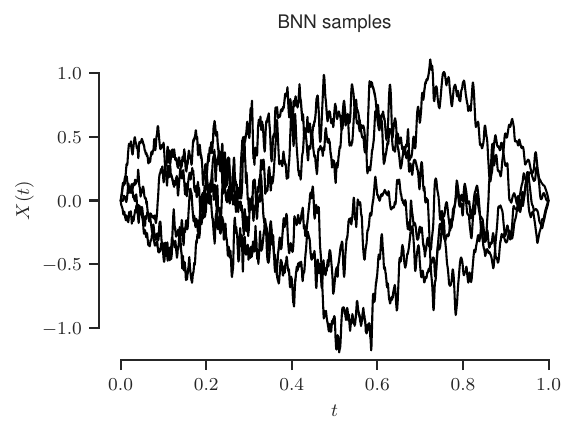}
        \caption{BNN with Brownian bridge Mercer prior.}
    \end{subfigure}
    \caption{Comparison between true Brownian bridge samples and samples from Bayesian neural networks with corresponding Mercer prior.}
    \label{fig:bridgesamps}
\end{figure}
To satisfy the boundary conditions, we take the parameterization to be $X(t;\theta) = t(1-t)u_{\theta}$, where $u_{\theta}$ is a BNN, and $\theta$ is sampled from the Mercer prior.
The same neural network architecture is used as in the high-fidelity Brownian motion case, and we also set $K = 1,000$ eigenvalues.

In Fig.~\ref{fig:bridgesamps}, we show a few samples from a BNN with the Brownian bridge Mercer prior as compared to the true Brownian bridge.
Qualitatively, each realization exhibits the expected features of the Brownian bridge.
A comparison between covariance forms is provided in Fig.~\ref{fig:BBcov}.
While not completely converged, the empirical covariance computed from these samples closely matches the analytical form, showing that the Mercer prior reproduces both the spatial correlation and the suppression of variance near the endpoints of the true GP.
\begin{figure}
    \centering
    \begin{subfigure}[b]{0.32\textwidth}
        \centering
        \includegraphics[width=\textwidth]{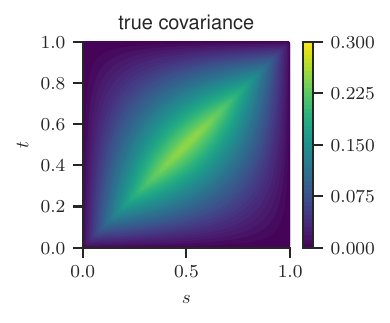}
    \end{subfigure}
    \hfill
        \begin{subfigure}[b]{0.32\textwidth}
        \centering
        \includegraphics[width=\textwidth]{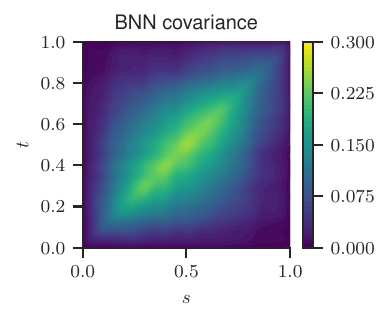}
    \end{subfigure}
    \hfill
        \begin{subfigure}[b]{0.32\textwidth}
        \centering
        \includegraphics[width=\textwidth]{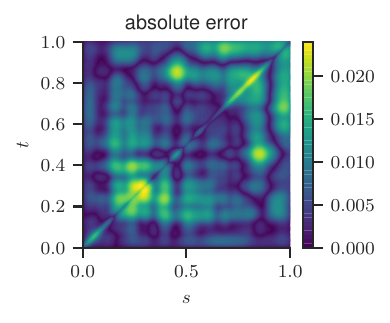}
    \end{subfigure}
    \caption{Comparison between the true covariance of the Brownian bridge process and the empirical covariance matrix resulting from the Mercer prior. In this example, we identify a maximum absolute error of $0.024$.}
    \label{fig:BBcov}
\end{figure}

\section{Example applications}
\label{sec:applications}

We provide some example applications of the Mercer prior in practice.
We highlight situations in which it is beneficial to replace a GP with a BNN, while also showing that the addition of a Mercer prior improves the performance of a BNN when compared to an i.i.d. Gaussian prior.
The examples covered all contain real-world data, and not simple synthetic examples, which we hope helps makes the case that the Mercer prior is worth the effort to implement in practice.
In each example, we report how each prior is chosen, which closely follows the procedure when selecting a GP prior.
We also report the hyperparameters used to create the examples, which are chosen following the usual guidelines.
The Mercer prior adds an additional layer of complexity, where one must decide how many eigenvalues and eigenfunctions to keep, as well as the subsampling sizes used to generate the unbiased estimate.
Compared to the Brownian motion examples, we find much less domain points are required when building an unbiased estimate of the integrals contained in the prior in order to achieve good posterior performance.
We conjecture that this is related to the difference in smoothness of the sample paths.
Following the empirical evidence of Sec.~\ref{subsec:evals}, we settle on keeping the first $100$ eigenvalues and eigenfunctions for kernels based on the Laplacian (for two of the examples), while one of the examples uses an engineered kernel and is discussed later.
In all of our examples, we subsample $10\%$ of the total eigenvalues and corresponding eigenfunctions at each SGLD step.

\subsection{Gaussian process regression with heteroscedastic noise}

For the first example, we highlight the ease with which the networks can be deployed in GP regression tasks with difficult posteriors.
This example demonstrates how the method allows for easy hierarchical GP regression.
We consider the dataset of~\cite{silverman1985some}, which consists of measurements of the acceleration of a helmet during a motorcycle crash at various points in time.
Note that we standardize the measurements and rescale the domain to $[0,1]$.
It is well-known that these individual measurements are subject to heteroscedastic noise.
We build a hierarchical Bayesian model to perform inference on this dataset, replacing any GPs with BNNs sampled from a Mercer prior.

Letting the individual data-pairs be denoted as $(t_i,u_i)_{i=1}^n$, we take the likelihood of an individual observation to be
$$
p(u_i | m,\sigma^2) = \mathcal{N}(u_i|m(t_i),\sigma^2(t_i)),
$$
where $m$ and $\sigma^2$ are functions which represent the evolving mean acceleration and heteroscedastic variance of the measurement, respectively.
We represent both as centered GPs $m \sim \mathcal{GP}(0,\Delta^{-2})$ and $\sigma \sim \mathcal{GP}(0,\Delta^{-2})$, which places the inductive bias that the mean and standard deviation should live in the space $H^1([0,1])$ at the bare minimum, by Lemma~\ref{lemma:sobolev}.
We select a unit variance for both $m$ and $\sigma$ because the data has been standardized.

\begin{figure}[h]
    \centering
    \begin{subfigure}[b]{0.475\textwidth}
        \centering
        \includegraphics[width=\textwidth]{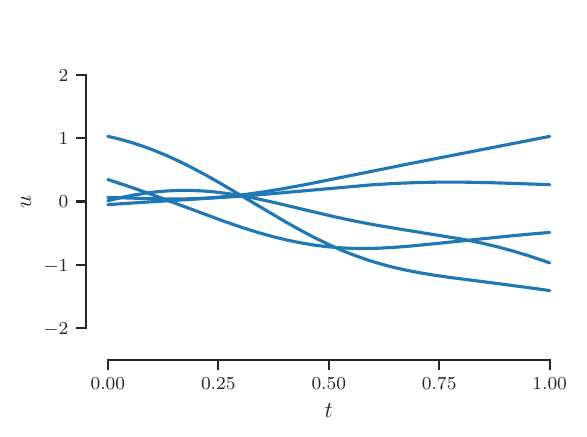}
        \caption{Mercer prior samples for the mean function}
    \end{subfigure}
    \hfill
    \begin{subfigure}[b]{0.475\textwidth}
        \centering
        \includegraphics[width=\textwidth]{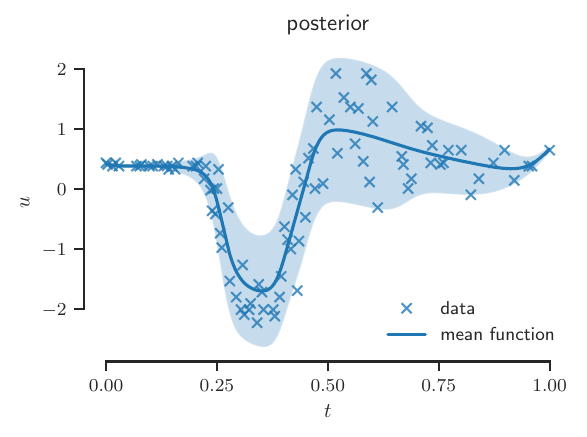}
        \caption{Posterior over the acceleration}
    \end{subfigure}
    \caption{Posterior mean function learned by the BNN Mercer prior model on the motorcycle dataset of~\cite{silverman1985some}. The shaded region represents the $95\%$ predictive interval. The model captures both the overall trend in acceleration and the heteroscedastic noise structure.}
    \label{fig:moto}
\end{figure}

We replace each GP with a BNN equipped with the corresponding Mercer prior, which we denote by $m_{\theta}$ and $\sigma_{\psi}$.
We use the same network architecture for each, which consists of a Fourier feature input layer with $32$ terms followed by a single layer with $1,500$ neurons.
So, each BNN is governed by an individual Mercer prior, i.e., $p(\theta) \propto \exp\left(-1/2 \langle m_{\theta}, \Delta^2 m_{\theta}\rangle \right)$ and $p(\psi) \propto \exp\left(-1/2 \langle \sigma_{\psi}, \Delta^2 \sigma_{\psi}\rangle \right)$.
This gives the hierarchical Bayesian model
\begin{equation*}
    \begin{array}{cc}
         \theta \sim p(\theta)& \psi \sim p(\psi)
    \end{array}
\end{equation*}
$$
u_{1:n} \sim \prod_{i=1}^n\mathcal{N}(m_{\theta}(t_i),\sigma^2_{\psi}(t_i)).
$$
The joint posterior is given by Bayes's rule as
\begin{equation}
    \label{eqn:posterior1}
    p(\theta,\psi | u_{1:n}) \propto p(\theta)p(\psi)\prod_{i=1}^n\mathcal{N}(m_{\theta}(t_i),\sigma^2_{\psi}(t_i)),
\end{equation}
which is then sampled with SGLD.
When sampling, the unbiased estimate is built by keeping the first $100$ eigenpairs of $\Delta^{-1}$.
Then, at each iteration we randomly subsample $10$ eigenpairs, $64$ points in the time domain, and $24$ of the available $94$ datapoints, which represents roughly $25\%$ of the total data.
The initial learning rate is set to $0.001$.
The resulting posterior is shown in Fig.~\ref{fig:moto}.
We see that the BNNs both learn the overall trend of the data and identify the heteroscedastic noise present.
Also, we generate the posterior on a super-fine mesh of one million points to further emphasize the ease at which the method scales.

While the posterior in eq.~(\ref{eqn:posterior1}) resembles a hierarchical GP model, implementing the same hierarchy with a GP would be a bit more difficult.
Sampling the posterior would require an $O(n^3)$ marginal likelihood evaluation at each iteration (to invert the covariance matrix).
By contrast, under the Mercer prior formulation it is straightforward to implement minibatching of the data.
In this way, the Mercer prior provides the interpretability of GP-based hierarchies together with the scalability of neural networks.
While this dataset is relatively small, it is easy to see how this idea extrapolates to datasets which would be computationally prohibitive for GPs.

\subsection{A periodic BNN}

\begin{figure}[h]
    \centering
    \begin{subfigure}[b]{0.3\textwidth}
        \centering
        \includegraphics[width=\textwidth]{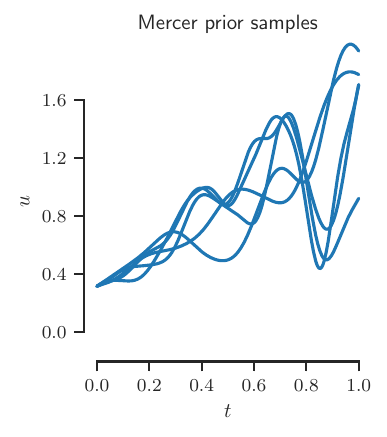}
        \caption{Samples from Mercer prior with periodic eigenfunctions}
    \end{subfigure}
    \hfill
    \begin{subfigure}[b]{0.3\textwidth}
        \centering
        \includegraphics[width=\textwidth]{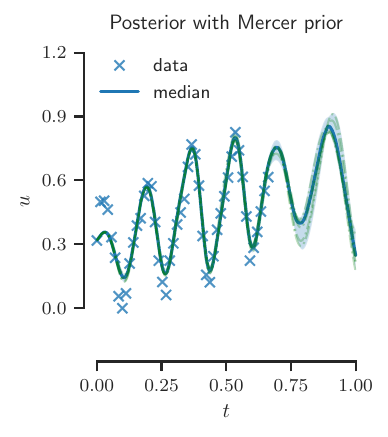}
        \caption{BNN posterior with the Mercer prior}
    \end{subfigure}
    \hfill
    \begin{subfigure}[b]{0.3\textwidth}
        \centering
        \includegraphics[width=\textwidth]{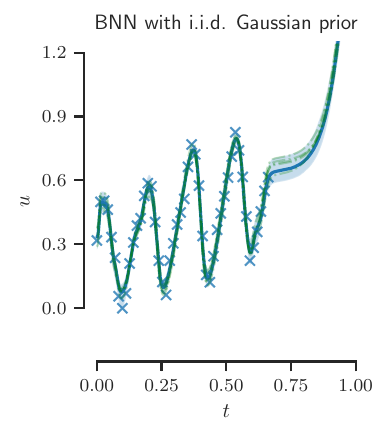}
        \caption{BNN posterior with an i.i.d. Gaussian prior}
    \end{subfigure}
    \caption{Posterior predictive distribution on the $\mathrm{CO}_2$ data for a BNN with a Mercer prior and a BNN with an i.i.d. Gaussian prior. The shaded region represents the $95\%$ credible interval. Samples from the posterior are shown in green.}
    \label{fig:co2}
\end{figure}

For the second example, we repeat an application found in~\cite{matsubara2021ridgelet}, which considers the atmospheric $\mathrm{CO}_2$ concentration levels taken at the Mauna Loa observatory in Hawaii~\cite{keeling2004monthly}.
This is a well-known benchmark dataset for models which encode prior information, due to the strong periodic trend present in the data~\cite{williams2006gaussian}.
We rescale both the measurements and time interval to $[0,1]$ before proceeding.
In this task, we aim to predict the $\mathrm{CO}_2$ concentration in the time interval $t \in (0.65,1]$, where the training data is in the time interval $t \in [0,0.65]$.
In unscaled units, this represents $4$ years of training data, with the predictions up to $2$ years in the future.

We compare the performance of a BNN equipped with a Mercer prior to that of a BNN where the parameters are sampled in the standard manner, i.e., each parameter is sampled an from independent Gaussian.
For the Mercer prior, we look to encode two pieces of prior information, namely that the data follows a linearly increasing trend, and that the data is periodic with a period of approximately $\rho = 0.18$ in scaled time units.
We select a mean function of $m(t) = 0.95t$.
For the covariance, one would typically select the periodic kernel~\cite{mackay1998introduction}, however, the corresponding eigenvalues and eigenfunctions associated with this kernel are not analytically available.
To use the Mercer prior, these would need to be approximated in some manner, e.g., the Nystr\"om approximation.

Instead, we define a custom kernel from predetermined orthonormal functions using the theory of finite-dimensional RKHSs.
We choose the system
$$
\phi_0(t) = \frac{1}{\sqrt{\rho}},\quad \phi_n^{\cos}=\sqrt{\frac{2}{\rho}}\cos\left(\frac{2\pi n t}{\rho}\right), \quad \phi_n^{\sin}= \sqrt{\frac{2}{\rho}}\sin\left(\frac{2\pi n t}{\rho}\right), \: 1\leq n\leq 5,
$$
which are orthonormal on $[0,\rho]$, and are then extended periodically to the full time interval.
For the corresponding eigenvalues, we choose $\lambda_0 = 1$ and for $n\neq 0$, we pick $\lambda_n = \exp(-2 (n\pi/\rho)^2)$, with algebraic multiplicity two, meaning that for each $n$, the corresponding $\phi_n^{\cos}$ and $\phi_n^{\sin}$ are paired with the same eigenvalue.
The eigenfunctions are chosen so that at the bare minimum, the kernel is able to capture the overall periodic trend of the data.
That is, $n=1$ is associated with eigenfunctions with the same period as the data.
The eigenvalues decay very rapidly (hence why we use so few terms), which discourages the BNN from learning higher frequency fluctuations.

We use different neural network architectures for each prior to keep the example fair.
The BNN equipped with the Mercer prior follows our typical case: a Fourier feature input layer with $32$ terms, followed by a single layer with $1,500$ neurons.
For the i.i.d. Gaussian BNN, we use a more standard architecture since wide, shallow neural networks, which we find works well for the Mercer prior, are uncommon in traditional machine learning.
Therefore, we select a deep neural network with $3$ hidden layers and $64$ neurons in each layer.
In both cases, we push the network parameters to the maximum likelihood estimate (MLE) before sampling with SGLD.
This helps decrease the amount of burn-in steps needed for SGLD (recall that the first phase of SGLD acts as a stochastic optimizer before generating samples).
The MLE is identified with $10,000$ steps of Adam with an initial learning rate of $0.01$.
For sampling/training with Adam, we select a minibatch size of $16$ of the $48$ total datapoints.
In this example, we subsample $24$ points in time in the domain and $5$ eigenpairs when building the estimate required for the Mercer prior.
The SGLD initial learning rate is $1\times 10^{-5}$, and we take $100,000$ posterior samples for each BNN.

Figure~\ref{fig:co2} shows the posterior predictive distribution and credible intervals for a BNN with the resulting Mercer prior and a BNN with i.i.d. Gaussian parameters.
The results show how the Mercer prior is able to enforce non-trivial prior information into the BNN, as the dominant periodic behavior is maintained in the future predictions.
Also of note is that the uncertainty grows appropriately in the region without data.
This example highlights how the eigenvalues and eigenfunctions can be tailored for a Mercer prior to handle specific problems.

\subsection{Nonlinear Bayesian inverse problem}

For the final example, we demonstrate how a BNN with a Mercer prior can serve as a replacement for a Gaussian measure in a nonlinear Bayesian inverse problem.
This example is related to the design of spacecraft thermal protection systems (TPS), where it is critical to understand the insulation properties of certain materials so that a TPS may be appropriately sized.
Real world examples include the Apollo spacecraft~\cite{pavlosky1974apollo} or the Space Shuttle~\cite{curry1993space}.
For this example, we consider the dataset found in~\cite{daryabeigi2024thermal}, where the thermal conductivity $\kappa$ of the Opacified Fibrous Insulation (OFI) is identified through experiments.
The dataset consists of steady-state temperature measurements of the OFI ranging from $300$ K to $1900$ K.
In total, we have $51$ temperature measurements of the material.
The goal is to identify the thermal conductivity of the OFI from the temperature measurements with uncertainty, which we then compare to the published experimental values.

This inverse problem is governed by the nonlinear steady state heat equation
\begin{equation}
    \label{eqn:heat}
    \frac{\partial}{\partial y}\left(\kappa(T)\frac{\partial T}{\partial y}\right)=0,
\end{equation}
where the domain is scaled from $0\leq y\leq 1$, $T$ denotes the temperature, and $\kappa$ denotes the thermal conductivity of the OFI, which is known to depend on the temperature.
The boundary conditions are $T(0) = 300$ and $T(1) = 1900$.
The resulting temperature measurements are denoted by $\mathbf{d} = (T_i)_{i=1}^{51}$ and are assumed to be corrupted with zero-mean Gaussian noise with a standard deviation of $\sigma = 2.5$ K.
This results in an infinite-dimensional nonlinear inverse problem.

Under the classical approach, one would start by assigning a Gaussian measure to the thermal conductivity $\mu_0 = \mathcal{N}(m,S)$, with $\log \kappa \sim \mu_0$ so that Bayes's theorem can be used to derive the posterior.
The measurement operator $R$ is given by the solution to eq.~(\ref{eqn:heat}) at the measurement locations.
Therefore the posterior $\mu^{\mathbf{d}}$ is given by the Radon-Nikodym derivative
$$
\frac{d\mu^{\mathbf{d}}}{d\mu_0}(\kappa) \propto \exp\left(-\frac{1}{2}\|\sigma^{-2}I(\mathbf{d} - R(\kappa))\|^2\right).
$$
As $R$ is nonlinear, the resulting posterior is not analytically tractable, and a sampling method is needed.
This presents a computational bottleneck, as repeated inversion of a large covariance matrix is needed to generate each sample.
Potential applications of this problem include simulation of atmospheric re-entry scenarios~\cite{walker2015multifunctional, edquist2009aerothermodynamic, wright2014sizing}, where a very fine mesh would be needed to accurately predict the temperature on the surface of a spacecraft.

We demonstrate how this inverse problem can be solved by replacing the prior Gaussian measure with a BNN with the equivalent Mercer prior.
For the prior knowledge, we need to impose the condition that $\kappa$ has at least one derivative, as $\partial \kappa / \partial T$ appears in eq.~(\ref{eqn:heat}).
Rather than defining the prior in the $\log$-space, we instead apply a softplus transformation, $\mathrm{softplus}(x) = \ln(1+e^x)$, which helps prevent underflow.
Putting this together, we select the prior $\mathrm{softplus}(\kappa) \sim \mathcal{N}(0,\Delta^{-2})$.
We represent this prior as a BNN $\kappa_{\theta}$ with corresponding Mercer prior, with the eigenvalues and eigenfunctions being as discussed in Sec.~\ref{subsec:kernels}.
The BNN architecture in this example is a simple one layer network with $32$ neurons, where again we keep the first $100$ eigenpairs in the expansion.
We find that the BNN does not need to be very expressive in order to produce prior samples which could reasonably represent $\kappa$, as evidenced in Fig.~\ref{fig:bipprior}.

\begin{figure}
    \centering
    \begin{subfigure}[b]{0.475\textwidth}
        \centering
        \includegraphics[width=\textwidth]{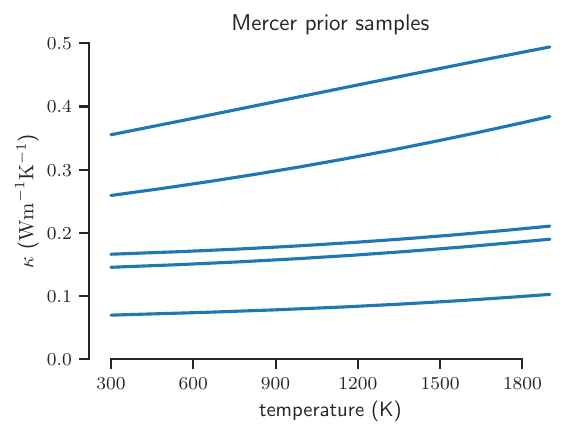}
        \caption{Prior thermal conductivity samples}
        \label{fig:bipprior}
    \end{subfigure}
    \hfill
    \begin{subfigure}[b]{0.475\textwidth}
        \centering
        \includegraphics[width=\textwidth]{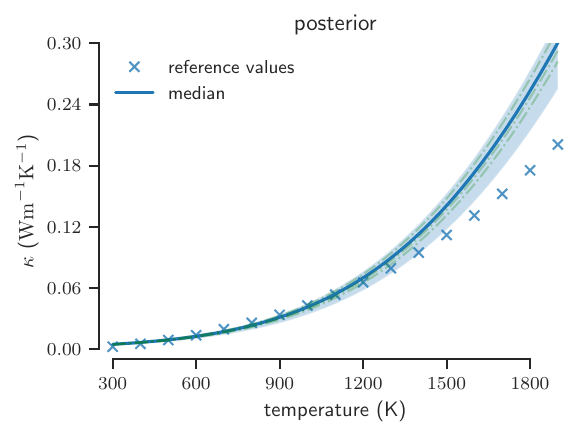}
        \caption{Posterior thermal conductivity samples}
    \end{subfigure}
    \caption{Posterior thermal conductivity of the OFI. The shaded region represents the $95\%$ credible interval, and a few posterior samples are shown in green.}
    \label{fig:bip}
\end{figure}

As the prior is now finite-dimensional, the posterior can be summarized by Bayes's rule as
$$
p(\theta | \mathbf{d}) \propto \exp\left(-\frac{1}{2}\|\sigma^{-2}I(\mathbf{d} - R(\kappa_{\theta}))\|^2\right) p(\theta).
$$
Before sampling from the posterior, we initialize the parameters at the maximum a posteriori estimate $\theta_0 = \arg \max p(\theta|\mathbf{d})$.
Each SGLD step uses $16$ randomly subsampled points in the domain to compute the $L^2$-inner products, $10$ randomly sampled eigenpairs, and a minibatch of $17$ temperature measurements, which is roughly $1/3$ of the data.
We take $1$ million SGLD steps to characterize the posterior.
The resulting BNN posterior is presented in Fig.~\ref{fig:bip}, where we find that the posterior accurately reconstructs the latent thermal conductivity, although there is some discrepancy at higher temperatures.
As this is an example with real experimental data, some amount of error is to be expected.

These results demonstrate that BNNs equipped with Mercer priors are well suited for Bayesian inverse problems governed by PDEs.
In particular, this approach has advantages over the standard GP-based formulation when the forward model is nonlinear, as posterior computation remains scalable thanks to SGLD.
The Mercer prior therefore provides a practical and interpretable route to uncertainty quantification in PDE-constrained inverse problems that would otherwise be computationally prohibitive.

\section{Conclusions}

In this work, we introduced Mercer priors, a new class of priors for BNNs derived from the Mercer representation of a covariance kernel.
The prior enforces a covariance structure on the parameters of the BNN such that in the output space the networks behave similarly to the corresponding GP with the chosen covariance.
This allows for BNNs to inherit the interpretability of GP priors while retaining the scalability and flexibility of neural networks.
One of the main advantages of the method is it works well for neural networks with relatively simple architectures and does not require much in the way of feature engineering.
When it comes to selecting a model, the majority of the effort is spent in choosing an appropriate GP prior that we wish to approximate.
A challenge that remains is the treatment of any hyperparameters appearing in the GP covariance, which was discussed in detail in Sec.~\ref{subsec:hyper}.

Through a series of examples, we investigated the performance of the Mercer prior both in simulating GPs and in applications.
By studying how the prior approximates Brownian motion, which has well-known analytical properties, we investigated how the truncation of the spectral expansion and the neural network width affected the fidelity of the method.
This was done with various statistical tests, which provided a great deal of empirical evidence that convergence is expected in the infinite-width limit.
However, rigorous convergence results connecting the Mercer prior to the corresponding Gaussian measure remains an important theoretical challenge.
In the hierarchical regression and periodic modeling examples, we showed how the Mercer prior allows the BNN to retain the properties of a GP while allowing for efficient and scalable posterior computation.
Then, in nonlinear PDE inverse problem, we demonstrated how the Mercer prior can serve as a replacement in applications which remain computationally prohibitive for the usual Gaussian prior.

Overall, this work highlights how meaningful priors for BNNs can be obtained by changing the parameter distribution, rather than the network structure.
By embedding a Gaussian covariance form directly into the neural network prior, we obtain models that are interpretable and computationally scalable, opening new possibilities for uncertainty quantification, inverse problems, and scientific machine learning.

\paragraph{Acknowledgments}
We would like to thank Kamran Daryabeigi and Akshay Jacob Thomas for providing the data used in example 5.3.
Thanks also to Sascha Ranftl and Jan Fugh for the conversation which ultimately led to these ideas, and thanks to Emmanuel Fleurantin for encouraging us to put these ideas to paper.
\paragraph{Funding}{This work was supported by an NSF Computational and Data-Enabled Science and Engineering grant under award number \#2347472.}

\bibliographystyle{plain}
\bibliography{references}

@article{rajput1972gaussian,
  title={Gaussian processes and {G}aussian measures},
  author={Rajput, Balram S and Cambanis, Stamatis},
  journal={The annals of mathematical statistics},
  pages={1944--1952},
  year={1972},
  publisher={JSTOR}
}

@article{kuo2006gaussian,
  title={Gaussian measures in {B}anach spaces},
  author={Kuo, Hui-Hsiung},
  journal={Gaussian measures in banach spaces},
  pages={1--109},
  year={2006},
  publisher={Springer}
}

@book{glimm2012quantum,
  title={Quantum physics: a functional integral point of view},
  author={Glimm, James and Jaffe, Arthur},
  year={2012},
  publisher={Springer Science \& Business Media}
}

@article{feynman1979path,
  title={Path integrals},
  author={Feynman, RP and Mechanics, AR Hibbs Quantum},
  journal={Lecture Notes Phys},
  volume={106},
  pages={64},
  year={1979},
  publisher={Springer}
}

@article{nguyen2016perturbative,
  title={The perturbative approach to path integrals: A succinct mathematical treatment},
  author={Nguyen, Timothy},
  journal={Journal of Mathematical Physics},
  volume={57},
  number={9},
  year={2016},
  publisher={AIP Publishing}
}

@book{bogachev1998gaussian,
  title={Gaussian measures},
  author={Bogachev, Vladimir Igorevich},
  number={62},
  year={1998},
  publisher={American Mathematical Soc.}
}

@inproceedings{welling2011bayesian,
  title={Bayesian learning via stochastic gradient Langevin dynamics},
  author={Welling, Max and Teh, Yee W},
  booktitle={Proceedings of the 28th international conference on machine learning (ICML-11)},
  pages={681--688},
  year={2011},
  organization={Citeseer}
}

@article{zhu1997gaussian,
  title={Gaussian regression and optimal finite dimensional linear models},
  author={Zhu, Huaiyu and Williams, Christopher KI and Rohwer, Richard and Morciniec, Michal},
  year={1997},
  publisher={Aston University}
}

@article{kanagawa2018gaussian,
  title={Gaussian processes and kernel methods: A review on connections and equivalences},
  author={Kanagawa, Motonobu and Hennig, Philipp and Sejdinovic, Dino and Sriperumbudur, Bharath K},
  journal={arXiv preprint arXiv:1807.02582},
  year={2018}
}

@article{aronszajn1950theory,
  title={Theory of reproducing kernels},
  author={Aronszajn, Nachman},
  journal={Transactions of the American mathematical society},
  volume={68},
  number={3},
  pages={337--404},
  year={1950}
}

@article{tancik2020fourier,
  title={Fourier features let networks learn high frequency functions in low dimensional domains},
  author={Tancik, Matthew and Srinivasan, Pratul and Mildenhall, Ben and Fridovich-Keil, Sara and Raghavan, Nithin and Singhal, Utkarsh and Ramamoorthi, Ravi and Barron, Jonathan and Ng, Ren},
  journal={Advances in neural information processing systems},
  volume={33},
  pages={7537--7547},
  year={2020}
}

@article{kidger2021equinox,
    author={Patrick Kidger and Cristian Garcia},
    title={{E}quinox: neural networks in {JAX} via callable {P}y{T}rees and filtered transformations},
    year={2021},
    journal={Differentiable Programming workshop at Neural Information Processing Systems 2021}
}

@software{jax2018github,
  author = {James Bradbury and Roy Frostig and Peter Hawkins and Matthew James Johnson and Chris Leary and Dougal Maclaurin and George Necula and Adam Paszke and Jake Vander{P}las and Skye Wanderman-{M}ilne and Qiao Zhang},
  title = {{JAX}: composable transformations of {P}ython+{N}um{P}y programs},
  url = {http://github.com/jax-ml/jax},
  version = {0.3.13},
  year = {2018},
}

@misc{cabezas2024blackjax,
      title={BlackJAX: Composable {B}ayesian inference in {JAX}},
      author={Alberto Cabezas and Adrien Corenflos and Junpeng Lao and Rémi Louf},
      year={2024},
      eprint={2402.10797},
      archivePrefix={arXiv},
      primaryClass={cs.MS}
}

@article{mackay1992practical,
  title={A practical Bayesian framework for backpropagation networks},
  author={MacKay, David JC},
  journal={Neural computation},
  volume={4},
  number={3},
  pages={448--472},
  year={1992},
  publisher={MIT Press One Rogers Street, Cambridge, MA 02142-1209, USA journals-info~…}
}

@book{neal2012bayesian,
  title={Bayesian learning for neural networks},
  author={Neal, Radford M},
  volume={118},
  year={2012},
  publisher={Springer Science \& Business Media}
}

@inproceedings{flam2017mapping,
  title={Mapping Gaussian process priors to Bayesian neural networks},
  author={Flam-Shepherd, Daniel and Requeima, James and Duvenaud, David},
  booktitle={NIPS Bayesian deep learning workshop},
  volume={3},
  year={2017}
}

@inproceedings{hafner2020noise,
  title={Noise contrastive priors for functional uncertainty},
  author={Hafner, Danijar and Tran, Dustin and Lillicrap, Timothy and Irpan, Alex and Davidson, James},
  booktitle={Uncertainty in Artificial Intelligence},
  pages={905--914},
  year={2020},
  organization={PMLR}
}

@inproceedings{pearce2020expressive,
  title={Expressive priors in Bayesian neural networks: Kernel combinations and periodic functions},
  author={Pearce, Tim and Tsuchida, Russell and Zaki, Mohamed and Brintrup, Alexandra and Neely, Andy},
  booktitle={Uncertainty in artificial intelligence},
  pages={134--144},
  year={2020},
  organization={PMLR}
}

@article{sun2019functional,
  title={Functional variational Bayesian neural networks},
  author={Sun, Shengyang and Zhang, Guodong and Shi, Jiaxin and Grosse, Roger},
  journal={arXiv preprint arXiv:1903.05779},
  year={2019}
}

@article{matsubara2021ridgelet,
  title={The ridgelet prior: A covariance function approach to prior specification for bayesian neural networks},
  author={Matsubara, Takuo and Oates, Chris J and Briol, Fran{\c{c}}ois-Xavier},
  journal={Journal of Machine Learning Research},
  volume={22},
  number={157},
  pages={1--57},
  year={2021}
}

@book{williams2006gaussian,
  title={Gaussian processes for machine learning},
  author={Williams, Christopher KI and Rasmussen, Carl Edward},
  volume={2},
  number={3},
  year={2006},
  publisher={MIT press Cambridge, MA}
}

@article{stuart2010inverse,
  title={Inverse problems: a Bayesian perspective},
  author={Stuart, Andrew M},
  journal={Acta numerica},
  volume={19},
  pages={451--559},
  year={2010},
  publisher={Cambridge University Press}
}

@inproceedings{wilson2015kernel,
  title={Kernel interpolation for scalable structured Gaussian processes (KISS-GP)},
  author={Wilson, Andrew and Nickisch, Hannes},
  booktitle={International conference on machine learning},
  pages={1775--1784},
  year={2015},
  organization={PMLR}
}

@article{liu2020gaussian,
  title={When Gaussian process meets big data: A review of scalable GPs},
  author={Liu, Haitao and Ong, Yew-Soon and Shen, Xiaobo and Cai, Jianfei},
  journal={IEEE transactions on neural networks and learning systems},
  volume={31},
  number={11},
  pages={4405--4423},
  year={2020},
  publisher={IEEE}
}

@article{bilionis2013multi,
  title={Multi-output separable Gaussian process: Towards an efficient, fully Bayesian paradigm for uncertainty quantification},
  author={Bilionis, Ilias and Zabaras, Nicholas and Konomi, Bledar A and Lin, Guang},
  journal={Journal of Computational Physics},
  volume={241},
  pages={212--239},
  year={2013},
  publisher={Elsevier}
}

@inproceedings{wilson2016deep,
  title={Deep kernel learning},
  author={Wilson, Andrew Gordon and Hu, Zhiting and Salakhutdinov, Ruslan and Xing, Eric P},
  booktitle={Artificial intelligence and statistics},
  pages={370--378},
  year={2016},
  organization={PMLR}
}

@inproceedings{ober2021promises,
  title={The promises and pitfalls of deep kernel learning},
  author={Ober, Sebastian W and Rasmussen, Carl E and van der Wilk, Mark},
  booktitle={Uncertainty in Artificial Intelligence},
  pages={1206--1216},
  year={2021},
  organization={PMLR}
}

@article{topol2019high,
  title={High-performance medicine: the convergence of human and artificial intelligence},
  author={Topol, Eric J},
  journal={Nature medicine},
  volume={25},
  number={1},
  pages={44--56},
  year={2019},
  publisher={Nature Publishing Group US New York}
}

@article{sarvamangala2022convolutional,
  title={Convolutional neural networks in medical image understanding: a survey},
  author={Sarvamangala, DR and Kulkarni, Raghavendra V},
  journal={Evolutionary intelligence},
  volume={15},
  number={1},
  pages={1--22},
  year={2022},
  publisher={Springer}
}

@article{kufel2023machine,
  title={What is machine learning, artificial neural networks and deep learning?—Examples of practical applications in medicine},
  author={Kufel, Jakub and Bargie{\l}-{\L}{\k{a}}czek, Katarzyna and Kocot, Szymon and Ko{\'z}lik, Maciej and Bartnikowska, Wiktoria and Janik, Micha{\l} and Czogalik, {\L}ukasz and Dudek, Piotr and Magiera, Miko{\l}aj and Lis, Anna and others},
  journal={Diagnostics},
  volume={13},
  number={15},
  pages={2582},
  year={2023},
  publisher={MDPI}
}

@article{hans2025smurf,
  title={SMURF: Scalable method for unsupervised reconstruction of flow in 4D flow MRI},
  author={Hans, Atharva and Singh, Abhishek and Vlachos, Pavlos and Bilionis, Ilias},
  journal={arXiv preprint arXiv:2505.12494},
  year={2025}
}

@book{steinwart2008support,
  title={Support vector machines},
  author={Steinwart, Ingo and Christmann, Andreas},
  year={2008},
  publisher={Springer Science \& Business Media}
}

@book{adler2010geometry,
  title={The geometry of random fields},
  author={Adler, Robert J},
  year={2010},
  publisher={SIAM}
}

@article{silverman1985some,
  title={Some aspects of the spline smoothing approach to non-parametric regression curve fitting},
  author={Silverman, Bernhard W},
  journal={Journal of the Royal Statistical Society: Series B (Methodological)},
  volume={47},
  number={1},
  pages={1--21},
  year={1985},
  publisher={Wiley Online Library}
}

@article{kingma2014adam,
  title={Adam: A method for stochastic optimization},
  author={Kingma, Diederik P},
  journal={arXiv preprint arXiv:1412.6980},
  year={2014}
}

@techreport{keeling2004monthly,
  title={Monthly atmospheric CO2 records from sites in the sio air sampling network},
  author={Keeling, Charles D and Whorf, Timothy P},
  year={2004},
  institution={Environmental System Science Data Infrastructure for a Virtual Ecosystem~…}
}

@article{mackay1998introduction,
  title={Introduction to Gaussian processes},
  author={MacKay, David JC and others},
  journal={NATO ASI series F computer and systems sciences},
  volume={168},
  pages={133--166},
  year={1998},
  publisher={Springer Verlag}
}

@techreport{pavlosky1974apollo,
  title={Apollo experience report: Thermal protection subsystem},
  author={Pavlosky, James E and others},
  year={1974}
}

@book{curry1993space,
  title={Space shuttle orbiter thermal protection system design and flight experience},
  author={Curry, Donald M},
  volume={104773},
  year={1993},
  publisher={Lyndon B. Johnson Space Center}
}

@techreport{daryabeigi2024thermal,
  title={Thermal Modeling and Testing of High-Temperature Refractory Ceramic Insulation Felts},
  author={Daryabeigi, Kamran},
  year={2024},
  institution = {NASA Langley Research Center}
}

@inproceedings{edquist2009aerothermodynamic,
  title={Aerothermodynamic design of the Mars science laboratory heatshield},
  author={Edquist, Karl and Dyakonov, Artem and Wright, Michael and Tang, Chun},
  booktitle={41st AIAA Thermophysics Conference},
  pages={4075},
  year={2009}
}

@article{wright2014sizing,
  title={Sizing and margins assessment of mars science laboratory aeroshell thermal protection system},
  author={Wright, Michael J and Beck, Robin AS and Edquist, Karl T and Driver, David and Sepka, Steven A and Slimko, Eric M and Willcockson, William H},
  journal={Journal of Spacecraft and Rockets},
  volume={51},
  number={4},
  pages={1125--1138},
  year={2014},
  publisher={American Institute of Aeronautics and Astronautics}
}

@inproceedings{walker2015multifunctional,
  title={A multifunctional hot structure heat shield concept for planetary entry},
  author={Walker, Sandra P and Daryabeigi, Kamran and Samareh, Jamshid A and Wagner, Robert and Waters, William},
  booktitle={20th AIAA International Space Planes and Hypersonic Systems and Technologies Conference},
  pages={3530},
  year={2015}
}

@article{zhang2019cyclical,
  title={Cyclical stochastic gradient MCMC for Bayesian deep learning},
  author={Zhang, Ruqi and Li, Chunyuan and Zhang, Jianyi and Chen, Changyou and Wilson, Andrew Gordon},
  journal={arXiv preprint arXiv:1902.03932},
  year={2019}
}

@article{dang2019hamiltonian,
  title={Hamiltonian Monte Carlo with energy conserving subsampling},
  author={Dang, Khue-Dung and Quiroz, Matias and Kohn, Robert and Tran, Minh-Ngoc and Villani, Mattias},
  journal={Journal of machine learning research},
  volume={20},
  number={100},
  pages={1--31},
  year={2019}
}

@article{blei2017variational,
  title={Variational inference: A review for statisticians},
  author={Blei, David M and Kucukelbir, Alp and McAuliffe, Jon D},
  journal={Journal of the American statistical Association},
  volume={112},
  number={518},
  pages={859--877},
  year={2017},
  publisher={Taylor \& Francis}
}

@article{meng2022learning,
  title={Learning functional priors and posteriors from data and physics},
  author={Meng, Xuhui and Yang, Liu and Mao, Zhiping and del {\'A}guila Ferrandis, Jos{\'e} and Karniadakis, George Em},
  journal={Journal of Computational Physics},
  volume={457},
  pages={111073},
  year={2022},
  publisher={Elsevier}
}

@article{albert2020gaussian,
  title={Gaussian process regression for data fulfilling linear differential equations with localized sources},
  author={Albert, Christopher G and Rath, Katharina},
  journal={Entropy},
  volume={22},
  number={2},
  pages={152},
  year={2020},
  publisher={MDPI}
}

@inproceedings{ranftl2023physics,
  title={Physics-consistency of infinite neural networks},
  author={Ranftl, Sascha},
  booktitle={37th Annual Conference on Neural Information Processing Systems: NeurIPS 2023},
  year={2023}
}

@book{candes1998ridgelets,
  title={Ridgelets: theory and applications},
  author={Candes, Emmanuel Jean},
  year={1998},
  publisher={Stanford University}
}

@article{sonoda2017neural,
  title={Neural network with unbounded activation functions is universal approximator},
  author={Sonoda, Sho and Murata, Noboru},
  journal={Applied and Computational Harmonic Analysis},
  volume={43},
  number={2},
  pages={233--268},
  year={2017},
  publisher={Elsevier}
}

@article{sonoda2018global,
  title={The global optimum of shallow neural network is attained by ridgelet transform},
  author={Sonoda, Sho and Ishikawa, Isao and Ikeda, Masahiro and Hagihara, Kei and Sawano, Yoshihiro and Matsubara, Takuo and Murata, Noboru},
  journal={arXiv preprint arXiv:1805.07517},
  year={2018}
}

@article{ensslin2009information,
  title={Information field theory for cosmological perturbation reconstruction and nonlinear signal analysis},
  author={En{\ss}lin, Torsten A and Frommert, Mona and Kitaura, Francisco S},
  journal={Physical Review D—Particles, Fields, Gravitation, and Cosmology},
  volume={80},
  number={10},
  pages={105005},
  year={2009},
  publisher={APS}
}

@article{teh2016consistency,
  title={Consistency and fluctuations for stochastic gradient Langevin dynamics},
  author={Teh, Yee Whye and Thiery, Alexandre H and Vollmer, Sebastian J},
  journal={The Journal of Machine Learning Research},
  volume={17},
  number={1},
  pages={193--225},
  year={2016},
  publisher={JMLR. org}
}

@article{lee2017deep,
  title={Deep neural networks as gaussian processes},
  author={Lee, Jaehoon and Bahri, Yasaman and Novak, Roman and Schoenholz, Samuel S and Pennington, Jeffrey and Sohl-Dickstein, Jascha},
  journal={arXiv preprint arXiv:1711.00165},
  year={2017}
}

@article{matthews2018gaussian,
  title={Gaussian process behaviour in wide deep neural networks},
  author={Matthews, Alexander G de G and Rowland, Mark and Hron, Jiri and Turner, Richard E and Ghahramani, Zoubin},
  journal={arXiv preprint arXiv:1804.11271},
  year={2018}
}

@article{novak2018bayesian,
  title={Bayesian deep convolutional networks with many channels are gaussian processes},
  author={Novak, Roman and Xiao, Lechao and Lee, Jaehoon and Bahri, Yasaman and Yang, Greg and Hron, Jiri and Abolafia, Daniel A and Pennington, Jeffrey and Sohl-Dickstein, Jascha},
  journal={arXiv preprint arXiv:1810.05148},
  year={2018}
}

@article{dutordoir2021deep,
  title={Deep neural networks as point estimates for deep Gaussian processes},
  author={Dutordoir, Vincent and Hensman, James and van der Wilk, Mark and Ek, Carl Henrik and Ghahramani, Zoubin and Durrande, Nicolas},
  journal={Advances in Neural Information Processing Systems},
  volume={34},
  pages={9443--9455},
  year={2021}
}

@article{williams1996computing,
  title={Computing with infinite networks},
  author={Williams, Christopher},
  journal={Advances in neural information processing systems},
  volume={9},
  year={1996}
}

@article{yang2019output,
  title={Output-constrained Bayesian neural networks},
  author={Yang, Wanqian and Lorch, Lars and Graule, Moritz A and Srinivasan, Srivatsan and Suresh, Anirudh and Yao, Jiayu and Pradier, Melanie F and Doshi-Velez, Finale},
  journal={arXiv preprint arXiv:1905.06287},
  year={2019}
}

@article{snelson2005sparse,
  title={Sparse Gaussian processes using pseudo-inputs},
  author={Snelson, Edward and Ghahramani, Zoubin},
  journal={Advances in neural information processing systems},
  volume={18},
  year={2005}
}

@article{hensman2013gaussian,
  title={Gaussian processes for big data},
  author={Hensman, James and Fusi, Nicolo and Lawrence, Neil D},
  journal={arXiv preprint arXiv:1309.6835},
  year={2013}
}

@inproceedings{titsias2009variational,
  title={Variational learning of inducing variables in sparse Gaussian processes},
  author={Titsias, Michalis},
  booktitle={Artificial intelligence and statistics},
  pages={567--574},
  year={2009},
  organization={PMLR}
}

@article{alberts2023physics,
  title={Physics-informed information field theory for modeling physical systems with uncertainty quantification},
  author={Alberts, Alex and Bilionis, Ilias},
  journal={Journal of Computational Physics},
  volume={486},
  pages={112100},
  year={2023},
  publisher={Elsevier}
}

@article{williams1992simple,
  title={Simple statistical gradient-following algorithms for connectionist reinforcement learning},
  author={Williams, Ronald J},
  journal={Machine learning},
  volume={8},
  number={3},
  pages={229--256},
  year={1992},
  publisher={Springer}
}

@book{polyanin2001handbook,
  title={Handbook of linear partial differential equations for engineers and scientists},
  author={Polyanin, Andrei D},
  year={2001},
  publisher={Chapman and hall/crc}
}

@article{williams2000using,
  title={Using the Nystr{\"o}m method to speed up kernel machines},
  author={Williams, Christopher and Seeger, Matthias},
  journal={Advances in neural information processing systems},
  volume={13},
  year={2000}
}

\end{document}